\documentclass[11pt]{article}

\usepackage{amsmath}
\usepackage{amsthm}
\usepackage{amssymb}
\usepackage{amsfonts}
\usepackage{subfigure}
\usepackage{color}
\usepackage{makeidx}
\usepackage{hyperref}

\newcommand{\ignore}[1]{}

\newtheorem{theorem}{Theorem}

\newtheorem{lemma}[theorem]{Lemma}

\newcommand{\HS}{{\rm HS}}

\begin{document}

\title{Learning Boolean Halfspaces with Small Weights \\ from Membership Queries}
\author{Hasan Abasi\\ Department of Computer Science\\ Technion, Haifa, 32000 \and
Ali Z. Abdi \\ Convent of Nazareth School \\ Grade 11, Abas 7, Haifa \and Nader H. Bshouty
\\ Department of Computer Science\\ Technion, Haifa, 32000 }


%
\maketitle

\begin{abstract}
We consider the problem of proper learning a Boolean Halfspace with
integer weights $\{0,1,\ldots,t\}$ from membership queries only.
The best known algorithm for this problem is an adaptive algorithm
that asks $n^{O(t^5)}$ membership queries where the best lower bound
for the number of membership queries is $n^t$ \cite{AAB99}.

In this paper we close this gap and give an adaptive proper learning algorithm with
two rounds that asks $n^{O(t)}$
membership queries. We also give a non-adaptive proper learning algorithm that asks
$n^{O(t^3)}$ membership queries.
\end{abstract}

\section{Introduction}
We study the problem of learnability of boolean halfspace functions from membership queries \cite{A87,A88}.
Boolean halfspace is a function $f=[w_1x_1+\cdots+w_nx_n\ge u]$ from $\{0,1\}^n$ to $\{0,1\}$
where the {\it weights} $w_1,\ldots,w_n$ and the {\it threshold} $u$ are integers.
The function is $1$ if the arithmetic sum $w_1x_1+\cdots+w_nx_n$ is greater or equal to $u$
and zero otherwise. In the {\it membership query} model \cite{A87,A88} the learning
algorithm has access to a {\it membership oracle} ${\cal O}_f$, for some {\it target function} $f$,
that receives an assignment $a\in\{0,1\}^n$
and returns $f(a)$. A {\it proper learning algorithm} for
a class of functions $C$ is an algorithm that has access to ${\cal O}_f$ where $f\in C$ asks
membership queries and returns a function $g$ {\it in} $C$ that is equivalent to $f$.

The problem of learning classes from membership queries only were motivated
from many problems in different areas such as computational biology that arises
in whole-genome (DNA) shotgun sequencing~\cite{BAK01,ABK04,CK10}, DNA library screening~\cite{ND00},
multiplex PCR method of genome physical mapping~\cite{GK98}, linkage
discovery problems of artificial intelligence~\cite{CK10}, chemical
reaction problem~\cite{AA05,AC06,AC08} and signature coding problem for the multiple access
adder channels \cite{BG07}.

Another scenario that motivate the problem of learning Halfspaces is the following.
Given a set of $n$ similar looking objects
of unknown weights (or any other measure), but from some class of weights $W$. Suppose
we have a scale (or a measure instrument)
that can only indicate whether the weight of any set of objects exceeds
some unknown fixed threshold (or capacity). How many weighing do one needs in order to find the weights
(or all possible weights) of the objects.

In this paper we study the problem of proper learnability of boolean halfspace functions
with~$t+1$ different non-negative weights $W=\{0,1,\ldots,t\}$ from membership queries.
The best known algorithm for this problem is an adaptive algorithm
that asks $n^{O(t^5)}$ membership queries where the best lower bound
for the number of membership queries is $n^t$ \cite{AAB99}.

In this paper we close the above gap and give an adaptive proper learning algorithm with
two rounds that asks $n^{O(t)}$ membership queries.
We also give a non-adaptive proper learning algorithm that asks
$n^{O(t^3)}$ membership queries. All the algorithms in this
paper runs in time that is linear in the membership query complexity.

Extending such result to non-positive weights is impossible.
In \cite{AAB99} Abboud et. al. showed that in order to learn
boolean Halfspace functions with weights $W=\{-1,0,1\}$, we need
at least $O(2^{n-o(n)})$ membership queries. Therefore the algorithm
that asks all the $2^n$ queries in $\{0,1\}^n$ is optimal for this case.
Shevchenko and Zolotykh \cite{SZ98} studied halfspace function
over the domain $\{0,1,\ldots,k-1\}^n$ and no constraints on the coefficients.
They gave the lower bound
$\Omega(\log^{n-2} k)$ lower bound for learning this class from
membership queries. Heged\"us~\cite{H95} prove the upper bound
$O(\log^n k/\log\log n)$. For fixed $n$ Shevchenko and Zolotykh \cite{ZS97}
gave a polynomial time algorithm (in $\log k$) for this class.

This paper is organized as follows. In Section~2 we give some definitions and
preliminary results. In Section ~3 we show that any boolean halfspace
with polynomially bounded coefficients can be expressed by
an Automaton of polynomial size. A result that will be used in Section~4.
In Section~4 we give the two round learning algorithm and the non-adaptive algorithm.

\section{Definitions and Preliminary Results}
In this section we give some definitions and preliminary results that
will be used throughout the paper

\subsection{Main Lemma}
In this subsection we prove two main results that will be frequently used in this paper

For integers $t<r$ we denote $[t]:=\{1,2,\ldots,t\}$, $[t]_0=\{0,1,\ldots,t\}$ and
$[t,r]=\{t,t+1,\ldots,r\}$.

We first prove the following

\ignore{
\begin{lemma} \label{L1} Let $w_1,\ldots,w_m$ be a sequence of real numbers
and $R$ be a set of real numbers where $|R|=r$. If for every $k\in [m]$
we have $W_k:=\sum_{i=1}^kw_i\in R\cup\{0\}$
then there is set $S\subseteq [m]$ of size $|R|$
such that
\begin{enumerate}
\item $$\sum_{i\in S} w_i=W_m=\sum_{i=1}^mw_i.$$
\item For every $k\in S$ we have $$W_k(S):=\sum_{i\in S\cap [k]} w_i\in R.$$
\end{enumerate}
\end{lemma}
\begin{proof} It is enough to show that if $|[m]|=m>|R|$ then there is a set $S\varsubsetneqq [m]$
that satisfies 1 and 2. Since for $k\in[m]$, $W_k\in R\cup\{0\}$, by the Pigeonhole principle,
either there is $k_2$ such that $W_{k_2}=0$ or
there are $k_1<k_2$ such that $W_{k_1}=W_{k_2}\not=0$. Therefore $W_{k_2}-W_{k_1}=\sum_{i=k_1+1}^{k_2} w_i=0$
($k_1=0$ for the former case).
Now let $S=\{1,\cdots,k_1,k_2+1,\cdots,m\}$. Notice that
$W_k(S)=W_k$ for $k\le k_1$ and $W_k(S)=W_k-(W_{k_2}-W_{k_1})=W_k$ for $k>k_2$
and therefore $S$ satisfies 1 and 2.
\end{proof}}

\begin{lemma}\label{L2} Let $w_1,\ldots,w_m\in [-t,t]$ where at least one $w_j\not\in \{-t,0,t\}$
and $$\sum_{i=1}^m w_i=r\in [-t+1,t-1].$$
There is a permutation $\phi:[m]\to [m]$
such that for every $j\in [m]$, $W_j:=\sum_{i=1}^jw_{\phi(i)}\in [-t+1,t-1]$.
\end{lemma}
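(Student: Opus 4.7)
The plan is to induct on $m$. The base case $m = 1$ is immediate: the single weight $w_1 = r \in [-t+1, t-1]$ gives a valid one-element permutation. For the inductive step with $m \geq 2$, my strategy is to pick one index $k \in [m]$ to place at the last position of the permutation and then invoke the induction hypothesis on the remaining $m-1$ weights. Concretely, I would seek $k$ such that (i) $r - w_k \in [-t+1,t-1]$, and (ii) the reduced multiset $\{w_i : i \neq k\}$ still contains at least one weight outside $\{-t,0,t\}$. Given such $k$, the remaining weights sum to $r - w_k \in [-t+1,t-1]$ and satisfy the lemma's hypothesis, so the induction hypothesis produces a permutation of them with partial sums in $[-t+1,t-1]$; appending $w_k$ then yields the desired permutation of all $m$ weights, whose final partial sum is $(r - w_k) + w_k = r \in [-t+1,t-1]$.

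The main technical content is proving such a $k$ exists. Let $S := \{i : w_i \notin \{-t,0,t\}\}$, so by hypothesis $|S| \geq 1$. When $|S| \geq 2$, condition (ii) holds automatically for any choice, so only (i) must be secured; I would argue by contradiction, supposing every $w_k$ satisfies $w_k \leq r - t$ or $w_k \geq r + t$. Combined with $w_k \in [-t,t]$ and $r \in [-t+1,t-1]$, a case split on the sign of $r$ forces all weights to a single extreme: for $r \geq 1$, every $w_k \leq r - t$, so $\sum w_k \leq m(r - t) \leq r - t < r$; symmetrically for $r \leq -1$; and for $r = 0$, every $w_k \in \{-t, t\}$ forces $|S| = 0$. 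Each outcome contradicts a hypothesis.

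When $|S| = 1$, let $w^*$ be the unique non-extreme non-zero weight, so the other $m - 1 \geq 1$ weights lie in $\{-t,0,t\}$. If some other weight equals $0$, I take $w_k = 0$ and both (i), (ii) are immediate. Otherwise those weights are all $\pm t$, with counts $p$ and $q$ satisfying $w^* + t(p-q) = r$; together with $|w^*| \leq t - 1$, $|r| \leq t - 1$, and $w^* \neq 0$, this forces $p - q \in \{-1, 0, 1\}$ and rules out $r = 0$. A short subcase analysis on the sign of $r$ and the value of $p - q$ then shows that $w_k = t$ (when $r \geq 1$) or $w_k = -t$ (when $r \leq -1$) is always available and meets both (i) and (ii).

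The main obstacle is the case $|S| = 1$: since removing the unique small non-zero weight $w^*$ would destroy (ii), one is forced to choose $w_k \in \{-t, 0, t\}$, and it takes a case check to verify that such a choice satisfying (i) always exists. This is precisely where the hypothesis that some $w_j \notin \{-t,0,t\}$ is essential; without it, configurations like $\{t,-t\}$ with $r = 0$ would admit no valid permutation at all.
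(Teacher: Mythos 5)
Your proof is correct, and it takes a genuinely different route from the paper's. The paper builds the permutation \emph{from the front} by a greedy rule: it first places one element of $[-t+1,t-1]\setminus\{0\}$, then repeatedly pairs a $+t$ with a $-t$ (ordered according to the sign of the running sum) until one of the two values is exhausted, and finally extends greedily, appending a negative unassigned weight when the running sum is positive and vice versa; the delicate point there is the moment the running sum hits $0$, which is rescued by the observation that after the pairing phase not both $+t$ and $-t$ survive among the unassigned weights. You instead build the permutation \emph{from the back} by induction on $m$, choosing the last element $w_k$ so that the remaining multiset still satisfies both hypotheses of the lemma (sum in $[-t+1,t-1]$ and at least one weight outside $\{-t,0,t\}$). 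The delicate point in your version is the exact analogue: when only one weight is non-extreme you cannot peel it off, and your count argument (forcing $p-q\in\{-1,0,1\}$ and $r\neq 0$) shows a removable $0$ or $\pm t$ of the right sign always exists. I checked both branches ($|S|\ge 2$ with the sign split on $r$, and $|S|=1$ with the $p$, $q$ analysis) and they go through. Your inductive formulation makes the invariant explicit and is arguably easier to verify rigorously than the paper's forward greedy, whose pairing phase and its interaction with the later steps are left somewhat informal; the cost is a slightly longer existence argument for the element to remove.
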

\begin{proof} Since
there is $j$ such that $w_j\in [-t+1,t-1]\backslash \{0\}$ we can take $\phi(1)=j$. Then $W_1=w_j\in[-t+1,t-1]$.
If there is $j_1,j_2$ such that $w_{j_1}=t$ and $w_{j_2}=-t$ we set $\phi(2)=j_1$, $\phi(3)=j_2$ if $W_1<0$
and $\phi(2)=j_2$, $\phi(3)=j_1$ if $W_1>0$. We repeat the latter
until there are either no more $t$ or no more $-t$ in the rest of the elements.

Assume that we have chosen $\phi(1),\ldots,\phi(k-1)$ such
that $W_j\in [-t+1,t-1]$ for $j\in[k-1]$. We now show
how to determine $\phi(k)$ so that $W_k\in [-t+1,t-1]$.
If $W_{k-1}=\sum_{i=1}^{k-1}w_{\phi(i)}> 0$
and there is $q\not\in\{\phi(1),\ldots,\phi(k-1)\}$ such that $w_q<0$ then
we take $\phi(k):=q$. Then $W_k=W_{k-1}+w_q\in [-t+1,t-1]$. If $W_{k-1}< 0$
and there is $q\not\in\{\phi(1),\ldots,\phi(k-1)\}$ such that $w_q>0$ then
we take $\phi(k):=q$. Then $W_k=W_{k-1}+w_q\in [-t+1,t-1]$. If for every
$q\not\in\{\phi(1),\ldots,\phi(k-1)\}$, $w_q>0$ (resp. $w_q<0$) then we can take an
arbitrary order of the other elements and we get $W_{k-1}<W_k<W_{k+1}<\cdots < W_m=r$
(resp. $W_{k-1}>W_k>W_{k+1}>\cdots > W_m=r$). If $W_{k-1}=0$ then there
must be $q\not\in\{\phi(1),\ldots,\phi(k-1)\}$ such that $w_q\in [-t+1,t-1]$.
This is because not both $t$ and $-t$ exist in the elements that are not assigned yet.
We then take $\phi(k):=q$.

This completes the proof.
\end{proof}

We now prove the first main lemma

\begin{lemma}\label{Main01} Let $w_1,\ldots,w_m\in [-t,t]$
and $$\sum_{i=1}^m w_i=r\in [-t+1,t-1].$$
There is a partition $S_1,S_2,\ldots,S_q$ of $[m]$
such that
\begin{enumerate}
\item For every $j\in [q-1]$, $\sum_{i\in S_j}w_{i}=0$.
\item $\sum_{i\in S_q}w_{i}=r$.
\item For every $j\in [q]$, $|S_j|\le 2t-1$.
\item If $r\not=0$ then $|S_q|\le 2t-2$.
\end{enumerate}
\end{lemma}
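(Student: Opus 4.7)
The plan is to reorder the weights so that every partial sum lies in the $2t-1$-point set $[-t+1,t-1]$ (using Lemma~\ref{L2}) and then repeatedly peel off zero-sum blocks of length at most $2t-1$ via a pigeonhole argument on the partial sums.

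Before invoking Lemma~\ref{L2}, I dispose of the degenerate case in which every $w_i\in\{-t,0,t\}$: here $\sum w_i$ is a multiple of $t$ lying in $[-t+1,t-1]$, so $r=0$; pairing each index of weight $+t$ with one of weight $-t$ and treating each $0$-weight index as a singleton gives a partition into zero-sum blocks of size at most~$2\le 2t-1$, any of which may serve as $S_q$.

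In the generic case, Lemma~\ref{L2} supplies a permutation $\phi$ with $W_j:=\sum_{i=1}^{j}w_{\phi(i)}\in[-t+1,t-1]$ for every $j\in[m]$. I then run the following extraction. While the current sequence has length at least $2t-1$, the $2t$ partial sums $W_0,W_1,\ldots,W_{2t-1}$ lie in a set of only $2t-1$ values, so pigeonhole furnishes indices $0\le k_1<k_2\le 2t-1$ with $W_{k_1}=W_{k_2}$. The block $\{\phi(k_1+1),\ldots,\phi(k_2)\}$ has sum $W_{k_2}-W_{k_1}=0$ and size $k_2-k_1\le 2t-1$, so I pull it out. After the removal, the remaining partial sums are exactly $W_0,\ldots,W_{k_1},W_{k_2+1},\ldots,W_m$ (the cancellation $W_{k_2}-W_{k_1}=0$ leaves the tail unshifted), so every surviving partial sum is still in $[-t+1,t-1]$ and the step may be repeated.

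I iterate until no two remaining partial sums are equal. At that moment the surviving partial sums are distinct elements of a $2t-1$-point set, so the leftover sequence has length $m'\le 2t-2$. If $r\ne 0$, the leftover indices form $S_q$ with sum $r$ and size at most $2t-2$, and together with the extracted blocks $S_1,\ldots,S_{q-1}$ all four conditions are satisfied. If $r=0$, then $W_0=W_{m'}=0$ is itself an equal pair (and a close one, since $m'\le 2t-2$), so the iteration cannot terminate until $m'=0$; I then simply relabel the last extracted block as $S_q$, whose sum is $0=r$ and whose size is at most $2t-1$.

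The main obstacle is ensuring that the equal pair extracted at each stage is \emph{close}, i.e.\ separated by at most $2t-1$ positions, so the block-size bound holds uniformly; this is precisely what the pigeonhole over a window of $2t$ consecutive partial sums delivers. The only other delicate point is the $r=0$ case, where the final remainder is empty, but taking $S_q$ to be one of the already-extracted zero-sum blocks handles this uniformly.
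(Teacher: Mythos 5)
Your proof is correct and follows essentially the same route as the paper's: dispose of the all-$\{-t,0,t\}$ case separately, reorder via Lemma~\ref{L2} so that every partial sum lies in the $(2t-1)$-element set $[-t+1,t-1]$, and peel off zero-sum blocks by pigeonhole on windows of $2t$ consecutive partial sums. Your way of securing item~4 --- continuing to extract equal pairs until all surviving partial sums are distinct, which forces the remainder to have length at most $2t-2$ --- is a slightly cleaner variant of the paper's case analysis on whether exactly $2t-1$ elements remain, but the argument is the same in substance.
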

\begin{proof} If $w_1,\ldots, w_m\in \{-t,0,t\}$ then $r$ must be zero,
and the number of non-zero elements is even and half of them are equal to $t$ and the other half are equal to $-t$.
Then we can take $S_i=\{-t,t\}$ or $S_i=\{0\}$ for all $i$. Therefore we may assume
that at least one $w_j\not\in \{-t,0,t\}$.

By Lemma~\ref{L2} we may assume w.l.o.g (by reordering the elements) that
such that $W_j:=\sum_{i=1}^jw_{i}\in [-t+1,t-1]$ for all $j\in [m]$. Let $W_0=0$.
Consider $W_0,W_1,W_2,\ldots,W_{2t-1}$. By the pigeonhole principle
there is $0\le j_1<j_2\le 2t-1$ such that $W_{j_2}=W_{j_1}$ and then
$W_{j_2}-W_{j_1}=\sum_{i=j_1+1}^{j_2} w_{i}=0$. We then take
$S_1=\{j_1+1,\ldots,j_2\}$. Notice that $|S_1|=j_2-j_1\le 2t-1$.

Since $\sum_{i\not\in S_1}w_i=r$ we can repeat the above to find $S_2,S_3,\cdots$.
This can be repeated as long as $|[m]\backslash (S_1\cup S_2\cup \cdots\cup S_h)|\ge 2t-1$.
This proves $1-3$.

We now prove 4.
If $g:=|[m]\backslash (S_1\cup S_2\cup \cdots\cup S_{h})|< 2t-1$ then
define $S_{h+1}=[m]\backslash (S_1\cup S_2\cup \cdots\cup S_{h})$ and we get 4 for $q=h+1$.
If $g=2t-1$ then $W_0=0,W_1,W_2,\ldots,W_{2t-1}=r$ and since $r\not=0$ we must have
$0\le j_1<j_2\le 2t-1$ and $j_2-j_1<2t-1$ such that $W_{j_2}=W_{j_1}$. Then define
$S_{h+1}=\{j_1+1,\ldots,j_2\}$, $S_{h+2}=[m]\backslash (S_1\cup S_2\cup \cdots\cup S_{h+1})$ and $q=h+2$.
Then $|S_{h+2}|\le 2t-2$, $\sum_{i\in S_{h+1}} w_{i}=W_{j_2}-W_{j_1}=0$ and $\sum_{i\in S_{h+2}} w_{i}=r$.
\end{proof}

The following example shows that the bound $2t-2$ for the
size of set in Lemma~\ref{Main01} is tight.
Consider the $2t-2$ elements
$w_1=w_2=\cdots =w_{t-1}=t$ and $w_{t}=w_{t+1}=\cdots=w_{2t-2}=-(t-1)$.
The sum of any subset of elements is distinct. By adding the element
$w_{2t-1}=-(t-1)$ it is easy to show that the bound $2t-1$ in the lemma
is also tight.

\begin{lemma}\label{Main02} Let $(w_1,v_1),\ldots,(w_m,v_m)\in [-t,t]^2$
and $$\sum_{i=1}^m (w_i,v_i)=(r,s)\in [-t+1,t-1]^2.$$
There is $M\subseteq [m]$
such that
\begin{enumerate}
\item $\sum_{i\in M}(w_{i},v_i)=(r,s)$.
\item $|M|\le 8t^3-4t^2-2t+1$.
\end{enumerate}
\end{lemma}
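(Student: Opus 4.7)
The plan is to apply Lemma~\ref{Main01} twice, bootstrapping the single-coordinate result into a two-coordinate selection. First I would apply Lemma~\ref{Main01} to $w_1,\ldots,w_m$ to obtain a partition $S_1,\ldots,S_q$ of $[m]$ with $\sum_{i\in S_j}w_i=0$ and $|S_j|\le 2t-1$ for $j<q$, and $\sum_{i\in S_q}w_i=r$ with $|S_q|\le 2t-1$ (improved to $|S_q|\le 2t-2$ when $r\ne 0$). Setting $v_j^*:=\sum_{i\in S_j}v_i$, one has $\sum_{j=1}^q v_j^*=s$ and $|v_j^*|\le t(2t-1)<2t^2$, so the ``super-weights'' $(v_j^*)$ lie in $[-T,T]$ for $T:=2t^2$.

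The second step is to run Lemma~\ref{Main01} on $(v_j^*)$ with threshold parameter $T=2t^2$, splitting on whether $r=0$. If $r=0$, the block $S_q$ plays no special role: apply the lemma to $v_1^*,\ldots,v_q^*$ with target $s\in[-T+1,T-1]$ to get a last piece $T_p$ with $\sum_{j\in T_p}v_j^*=s$ and $|T_p|\le 2T-1=4t^2-1$. Taking $M=\bigcup_{j\in T_p}S_j$ gives $\sum_M(w_i,v_i)=(0,s)=(r,s)$ and
\[
|M|\le(2t-1)(4t^2-1)=8t^3-4t^2-2t+1.
\]

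If instead $r\ne 0$, the block $S_q$ is the only one with a nonzero $w$-contribution and must be included in $M$; then I look for $J\subseteq[q-1]$ with $\sum_{j\in J}v_j^*=s-v_q^*$. The refinement $|S_q|\le 2t-2$ gives $|v_q^*|\le 2t^2-2t$, hence $|s-v_q^*|\le 2t^2-t-1\le T-1$, and Lemma~\ref{Main01} applies to $v_1^*,\ldots,v_{q-1}^*$ with this shifted target, producing a last block $T_p$ of size $\le 2T-2=4t^2-2$ (take $J=\emptyset$ if the shifted target is $0$). Setting $M=S_q\cup\bigcup_{j\in T_p}S_j$ then gives $|M|\le(2t-2)+(2t-1)(4t^2-2)=8t^3-4t^2-2t$, comfortably within the claimed bound.

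The main subtlety I expect is the range mismatch in the second call: after collapsing each $S_j$ to its super-weight, neither $v_j^*$ nor the shifted target $s-v_q^*$ lies in $[-t,t]$, so Lemma~\ref{Main01} must be re-invoked with the larger threshold $T=2t^2$. The fact that the target bound factors as $(2t-1)(2T-1)$ indicates that $T=2t^2$ is the right choice, and the sharper estimate $|S_q|\le 2t-2$ from Lemma~\ref{Main01} is precisely what keeps the shifted target inside $[-T+1,T-1]$ in the $r\ne 0$ branch.
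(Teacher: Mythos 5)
Your proposal is correct and follows essentially the same route as the paper: apply Lemma~\ref{Main01} to the $w$-coordinates, collapse each block to a super-weight $V_j=\sum_{i\in S_j}v_i\in[-2t^2,2t^2]$, and re-invoke Lemma~\ref{Main01} with threshold $2t^2$ to select blocks whose $v$-sums hit the shifted target. The only difference is cosmetic — the paper splits on whether $s-V_q=0$ rather than on whether $r=0$, and bounds $|S_q|$ by $2t-1$ uniformly — and both case analyses yield the stated bound $(2t-1)(4t^2-1)=8t^3-4t^2-2t+1$.
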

\begin{proof} Since $w_1,\ldots,w_m\in [-t,t]$
and $\sum_{i=1}^m w_i=r\in [-t+1,t-1]$,
by Lemma~\ref{Main01}, there is a partition $S_1,\ldots,S_q$
of $[m]$ that satisfies the conditions $1-4$ given in the lemma.
Let $V_j=\sum_{i\in S_j}v_i$ for $j=1,\ldots,q$.
We have $$V_j\in [-t|S_j|,t|S_j|]\subseteq [-t(2t-1),t(2t-1)]\subset [-2t^2,2t^2]$$ for $j=1,\ldots,q$ and
\ignore{When $r\not=0$ we also have $|S_q|\le 2t-2$, $V_q\in [-t|S_q|,t|S_q|]\subseteq [-t(2t-2),t(2t-2)]$ and}
$$\sum_{i=1}^{q-1} V_i=s-V_q\in [-2t^2+1,2t^2-1].$$

If $s-V_q=0$ then for $M=S_q$ we have $|M|=|S_q|\le 2t-1\le 8t^3-4t^2-2t+1$ and
\begin{eqnarray*}
\sum_{i\in M}(w_i,v_i)&=& \sum_{i\in S_q}(w_i,v_i) =(r,V_q)=(r,s).
\end{eqnarray*}
Therefore we may assume that $s-V_q\not=0$.

Consider $V_1,V_2,\ldots,V_{q-1}$.
By 4 in Lemma~\ref{Main01} there is a set $Q\subseteq [q-1]$ of size at most $2(2t^2)-2=4t^2-2$
such that $\sum_{i\in Q} V_i= s-V_q$. Then for
$$M=S_q\cup \bigcup_{i\in Q} S_i$$ we have
$$|M|\le (2t-1)+(4t^2-2)(2t-1)=8t^3-4t^2-2t+1$$
and
\begin{eqnarray*}
\sum_{i\in M}(w_i,v_i)&=& \sum_{i\in S_q}(w_i,v_i) +\sum_{j\in Q}\sum_{i\in S_j} (w_i,v_i)\\
&=& (r,V_q) +\sum_{j\in Q} (0,V_j)\\
&=& (r,V_q)+(0,s-V_q)=(r,s).
\end{eqnarray*}
\end{proof}

\subsection{Boolean Functions}
For a boolean function $f(x_1,\ldots,x_n):\{0,1\}^n\to \{0,1\}$, $1\le i_1<i_2<\cdots<i_k\le n$
and $\sigma_1,\ldots,\sigma_k\in\{0,1\}$ we denote by $$f|_{x_{i_1}=\sigma_1, x_{i_2}=
\sigma_2,\cdots, x_{i_k}=\sigma_k}$$ the function $f$ when fixing the variables
$x_{i_j}$ to~$\sigma_j$ for all $j\in[k]$. For $a\in\{0,1\}^n$
we denote by $a|_{x_{i_1}=\sigma_1, x_{i_2}=
\sigma_2,\cdots, x_{i_k}=\sigma_k}$ the assignment $a$ where each $a_{i_j}$ is replaced
by $\sigma_j$ for all $j\in [k]$. We note here (and throughout the paper)
that $f|_{x_{i_1}=\sigma_1, x_{i_2}=
\sigma_2,\cdots, x_{i_k}=\sigma_k}$ is a function from $\{0,1\}^n\to \{0,1\}$ with
same variables $x_1,\ldots,x_n$ of $f$. Obviously
$$f|_{x_{i_1}=\sigma_1, x_{i_2}=
\sigma_2,\cdots, x_{i_k}=\sigma_k}(a)=f(a|_{x_{i_1}=\sigma_1, x_{i_2}=
\sigma_2,\cdots, x_{i_k}=\sigma_k}).$$
When $\sigma_1=\cdots=\sigma_k=\xi$ and $S=\{x_{i_1},\ldots,x_{i_k}\}$ we denote $$f|_{S\gets \xi}=f|_{x_{i_1}= \xi, x_{i_2}=
 \xi,\cdots, x_{i_k}= \xi}.$$
In the same way we define $a|_{S\gets \xi}$.
We denote by $0^n=(0,0,\ldots,0)\in\{0,1\}^n$ and  $1^n=(1,1,\ldots,1)\in\{0,1\}^n$.
For two assignments $a\in\{0,1\}^k$ and $b\in \{0,1\}^j$ we denote by $ab\in \{0,1\}^{k+j}$
the concatenation of the two assignments.

For two assignments $a,b\in \{0,1\}^n$ we write $a\le b$ if
for every $i$, $a_i\le b_i$. A boolean function $f:\{0,1\}^n\to\{0,1\}$ is {\it monotone} if
for every two assignments $a,b\in\{0,1\}^n$, if $a\le b$ then $f(a)\le f(b)$.
Recall that every monotone boolean function $f$ has a unique representation
as a reduced monotone DNF. That is, $f = M_1\vee M_2 \vee \cdots \vee M_s$ where
each {\it monomial} $M_i$ is an ANDs of input variables and for every
monomial $M_i$ there is a unique assignment $a^{(i)}\in\{0,1\}^n$ such that $f(a^{(i)})=1$
and for every $j\in [n]$ where $a^{(i)}_j=1$ we have $f(a^{(i)}|_{x_j= 0})=0$. We call
such assignment a {\it minterm} of the function~$f$. Notice that
every monotone DNF can be uniquely determined by its minterms.

We say that $x_i$ is {\it relevant} in $f$ if $f|_{x_i=0}\not\equiv f|_{x_i= 1}$.
Obviously, if $f$ is monotone then $x_i$ is relevant in $f$ if there is an assignment
$a$ such that $f(a|_{x_i= 0})=0$ and $f(a|_{x_i= 1})=1$. We say that $a$ is
a {\it semiminterm} of $f$ if for every $a_i=1$ either $f(a|_{x_i=0})=0$ or $x_i$ is not relevant in $f$.

For two assignments $a,b\in \{0,1\}^n$ we define the {\it distance} between $a$
and $b$ as $wt(a+b)$ where $wt$ is the Hamming weight and $+$ is the bitwise exclusive
or of assignments. The set $B(a;d)$ is the set of all assignments that are of distance
at most $d$ from $a\in\{0,1\}^n$.

\subsection{Symmetric and Nonsymmetric}
We say that a boolean function $f$ is {\it symmetric} in $x_i$ and $x_j$ if
for any $\xi_1,\xi_2\in\{0,1\}$ we have $f|_{x_i=\xi_1,x_j=\xi_2}\equiv f|_{x_i=\xi_2,x_j=\xi_1}$.
Obviously, this is equivalent to $f|_{x_i=0,x_j=1}\equiv f|_{x_i=1,x_j=0}$.
We say that $f$ is {\it nonsymmetric} in $x_i$ and $x_j$ if it is not symmetric in
$x_i$ and $x_j$. This is equivalent to $f|_{x_i=0,x_j=1}\not\equiv f|_{x_i=1,x_j=0}$.
We now prove
\begin{lemma}\label{nons}
Let $f$ be a monotone function. Then $f$ is nonsymmetric in $x_i$ and $x_j$ if
and only if there is a minterm $a$ of $f$ such that $a_i+a_j=1$ (one is $0$ and the other is $1$)
where $f(a|_{x_i=0,x_j=1})\not= f(a|_{x_i=1,x_j=0})$.
\end{lemma}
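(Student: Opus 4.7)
The reverse implication is immediate: if such a minterm $a$ exists, then $f|_{x_i=0,x_j=1}$ and $f|_{x_i=1,x_j=0}$ already disagree at $a$, so $f$ is nonsymmetric in $x_i,x_j$. So the whole content of the lemma is in the forward direction.

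For the forward direction, unfolding the definition, nonsymmetry gives some $b\in\{0,1\}^n$ with $f(b|_{x_i=0,x_j=1})\neq f(b|_{x_i=1,x_j=0})$. By swapping the roles of $i$ and $j$ if necessary, I will assume $f(b|_{x_i=1,x_j=0})=1$ and $f(b|_{x_i=0,x_j=1})=0$. Set $c=b|_{x_i=1,x_j=0}$ and $c'=b|_{x_i=0,x_j=1}$, so $c_i=1,c_j=0$, $c'_i=0,c'_j=1$, $f(c)=1$, $f(c')=0$.

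The plan is to produce the desired minterm by walking down from $c$ to any minterm $a$ of $f$ with $a\leq c$ and $a_i=1$, and then checking the two requirements. The key claim to establish is that such a minterm exists. I would argue by contradiction: if every minterm $a\leq c$ had $a_i=0$, then each such minterm is already $\leq c|_{x_i=0}$, which forces $f(c|_{x_i=0})=1$ (the value $1$ of a monotone function at $c$ is certified by some minterm below $c$). By monotonicity, $f(c|_{x_i=0,x_j=1})\geq f(c|_{x_i=0})=1$, but $c|_{x_i=0,x_j=1}=c'$ has $f(c')=0$, a contradiction. So a minterm $a\leq c$ with $a_i=1$ exists, and automatically $a_j\leq c_j=0$, giving $a_i+a_j=1$.

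It remains to verify the inequality $f(a|_{x_i=0,x_j=1})\neq f(a|_{x_i=1,x_j=0})$. Since $a_i=1,a_j=0$, the second value is just $f(a)=1$ because $a$ is a minterm. For the first value, I use $a\leq c$ once more: flipping the same two coordinates preserves the order, so $a|_{x_i=0,x_j=1}\leq c|_{x_i=0,x_j=1}=c'$, and monotonicity gives $f(a|_{x_i=0,x_j=1})\leq f(c')=0$. This completes the forward direction.

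The only mildly nontrivial step is the existence of a minterm $a\leq c$ with $a_i=1$; everything else is a direct application of monotonicity. I expect that step to be the main obstacle, and the trick above — observing that the failure of existence would make $c|_{x_i=0}$ a $1$-point of $f$ and hence, after raising $x_j$ to $1$, contradict $f(c')=0$ — is exactly what leverages the monotonicity hypothesis that the lemma requires.
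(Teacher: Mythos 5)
Your proof is correct and follows essentially the same route as the paper: descend from the witness point $c=b|_{x_i=1,x_j=0}$ to a minterm and use monotonicity (via $a\le c$, hence $a|_{x_i=0,x_j=1}\le c'$) to verify the required disagreement. The only difference is bookkeeping: the paper takes a minterm of the restricted function $f|_{x_i=1,x_j=0}$ below the witness and then checks that lifting it back (setting $x_i=1$, $x_j=0$) yields a minterm of $f$, whereas you take a minterm of $f$ itself below $c$ and supply a short contradiction argument to guarantee one with $a_i=1$ exists --- both devices are valid and of comparable length.
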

\begin{proof} Since $f$ is nonsymmetric in $x_i$ and $x_j$ we have
$f|_{x_i=0,x_j=1}\not\equiv f|_{x_i=1,x_j=0}$ and therefore there is an
assignment $a'$ such that $f|_{x_i=0,x_j=1}(a')\not= f|_{x_i=1,x_j=0}(a')$.
Suppose w.l.o.g. $f|_{x_i=0,x_j=1}(a')=0$ and $f|_{x_i=1,x_j=0}(a')=1$.
Take a minterm $a\le a'$ of $f|_{x_i=1,x_j=0}$. Notice that $a_i=a_j=0$.
Otherwise we can flip them to $0$ without changing the value of the function
$f|_{x_i=1,x_j=0}$ and then $a$ is not a minterm.
Then $f|_{x_i=1,x_j=0}(a)=1$ and since $a\le a'$,
$f|_{x_i=0,x_j=1}(a)=0$.

We now prove that $b=a|_{x_i=1,x_j=0}$ is a minterm of $f$.
Since $b|_{x_i=0}=a|_{x_i=0,x_j=0}< a|_{x_i=0,x_j=1}$
we have $f(b|_{x_i=0})<f(a|_{x_i=0,x_j=1})=f|_{x_i=0,x_j=1}(a)=0$
and therefore $f(b|_{x_i=0})=0$. For any $b_k=1$ where $k\not =i$, since
$a$ is a minterm for $f|_{x_i=1,x_j=0}$, we have
$f(b|_{x_k=0})=f|_{x_i=1,x_j=0}(a|_{x_k=0})=0$. Therefore $b$ is a minterm of $f$.
\end{proof}

We write $x_i\sim_f x_j$ when $f$ is symmetric in $x_i$ and $x_j$
and call $\sim_f$ the symmetric relation of $f$. The following folklore
result is proved for completeness

\begin{lemma} The relation $\sim_f$ is an equivalence relation.
\end{lemma}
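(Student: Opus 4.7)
The plan is to verify the three defining properties of an equivalence relation in turn. Reflexivity $x_i\sim_f x_i$ is vacuous (or interpreted as the trivial swap of a variable with itself), and symmetry $x_i\sim_f x_j \Rightarrow x_j\sim_f x_i$ is immediate from the symmetric form of the defining identity $f|_{x_i=\xi_1,x_j=\xi_2}\equiv f|_{x_i=\xi_2,x_j=\xi_1}$, which is invariant under interchanging $i$ and $j$. So the only real work is transitivity.

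For transitivity, I would assume $x_i\sim_f x_j$ and $x_j\sim_f x_k$ and try to establish $x_i\sim_f x_k$ by verifying $f|_{x_i=0,x_k=1}(a)=f|_{x_i=1,x_k=0}(a)$ on an arbitrary assignment $a$. The natural move is to split on the value of $a_j$ and push the swap through $x_j$ as an intermediate step: in each case the swap of $(x_i,x_k)$ is realized as the composition of an $(x_i,x_j)$-swap and a $(x_j,x_k)$-swap, each of which preserves the value of $f$ by hypothesis.

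Concretely, when $a_j=0$ one chains
\[
f|_{x_i=0,x_j=0,x_k=1}(a) \;=\; f|_{x_i=0,x_j=1,x_k=0}(a) \;=\; f|_{x_i=1,x_j=0,x_k=0}(a),
\]
using $x_j\sim_f x_k$ for the first equality and $x_i\sim_f x_j$ for the second. When $a_j=1$ the analogous chain
\[
f|_{x_i=0,x_j=1,x_k=1}(a) \;=\; f|_{x_i=1,x_j=0,x_k=1}(a) \;=\; f|_{x_i=1,x_j=1,x_k=0}(a)
\]
works, now applying $x_i\sim_f x_j$ first and $x_j\sim_f x_k$ second. Either way the two sides of the required identity agree, which gives $x_i\sim_f x_k$.

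I do not expect any real obstacle here; the only mild subtlety is remembering that the definition $f|_{x_i=0,x_j=1}\equiv f|_{x_i=1,x_j=0}$ is an identity of functions, so the hypotheses can be applied pointwise at any assignment $a$, including one in which $a_j$ has been fixed as above.
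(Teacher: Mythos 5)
Your proof is correct and follows essentially the same route as the paper: transitivity is obtained by composing swaps through the intermediate variable $x_j$. The only cosmetic difference is that the paper writes one uniform three-swap chain $f|_{x_i=\xi_1,x_j=\xi_2,x_k=\xi_3}\equiv\cdots\equiv f|_{x_i=\xi_3,x_j=\xi_2,x_k=\xi_1}$ for arbitrary $\xi_1,\xi_2,\xi_3$, whereas you case-split on the value of $a_j$ and use two swaps in each case; both are valid.
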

\begin{proof} Obviously, $x_i\sim_f x_i$ and if $x_i\sim_f x_j$ then $x_j\sim_f x_i$.
Now if $x_i\sim_f x_j$ and $x_j\sim_f x_k$ then
$f|_{x_i=\xi_1,x_j=\xi_2,x_k=\xi_3}\equiv f|_{x_i=\xi_2,x_j=\xi_1,x_k=\xi_3}\equiv f|_{x_i=\xi_2,x_j=\xi_3,x_k=\xi_1}
$ $\equiv f|_{x_i=\xi_3,x_j=\xi_2,x_k=\xi_1}$ and therefore $x_i\sim_f x_k$.
\end{proof}

\subsection{Properties of Boolean Halfspaces}

A {\it Boolean Halfspace} function is a boolean function $f:\{0,1\}^n\to \{0,1\}$,
$f=[w_1x_1+w_2x_2+\cdots+w_nx_n\ge u]$ where $w_1,\ldots,w_n,u$ are integers, defined as
$f(x_1,\ldots,x_n)=1$ if $w_1x_1+w_2x_2+\cdots+w_nx_n\ge u$ and $0$ otherwise.
The numbers $w_i$, $i\in [n]$ are called the {\it weights} and $u$ is called the {\it threshold}.
The class $\HS$ is the class of all Boolean Halfspace functions. The class
$\HS_t$ is the class of all Boolean Halfspace functions with weights $w_i\in [t]_0$ and
the class $\HS_{[-t,t]}$ is the class of all Boolean Halfspace functions with weights $w_i\in [-t,t]$.
The representation of the above Boolean Halfspaces are not unique. For example,
$[3x_1+2x_2\ge 2]$ is equivalent to $[x_1+x_2\ge 1]$. We will assume that
\begin{eqnarray}\label{assump1}
\mbox{There is an assignment $a\in\{0,1\}^n$ such that $w_1a_1+\cdots+w_na_n=b$}
\end{eqnarray}
Otherwise we can replace $b$ by the minimum integer $w_1a_1+\cdots+w_na_n$ where $f(a)=1$
and get an equivalent function. Such $a$ is called a {\it strong assignment} of $f$.
If in addition $a$ is a minterm then it is called a {\it strong minterm}.

The following lemma follows from the above definitions
\begin{lemma} \label{pre} Let $f\in\HS_t$. We have
\begin{enumerate}
\item If $a$ is strong assignment of $f$
then $a$ is semiminterm of $f$.

\item If all the variables in $f$ are relevant then any semiminterm of $f$ is a minterm of $f$.
\end{enumerate}
\end{lemma}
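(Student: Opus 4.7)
The plan is to treat the two parts separately, with both relying on the special role played by non-negative weights in $\HS_t$.

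For part (1), I would unpack the definition of strong assignment: $a$ satisfies $\sum_{j=1}^n w_j a_j = u$ exactly, so certainly $f(a)=1$. To verify the semiminterm condition, I would pick any index $i$ with $a_i = 1$ and split on the value of $w_i$. Since $f \in \HS_t$, the weights live in $[t]_0$, so either $w_i = 0$ or $w_i \ge 1$. If $w_i \ge 1$, then flipping $a_i$ to $0$ strictly lowers the sum to $u - w_i < u$, so $f(a|_{x_i=0}) = 0$. If $w_i = 0$, then coordinate $i$ contributes nothing to the linear form of this representation of $f$, so $f|_{x_i=0} \equiv f|_{x_i=1}$ identically and $x_i$ is not relevant in $f$. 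Either way the defining clause of semiminterm holds.

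For part (2), the argument is purely logical once part (1) is in place: the semiminterm definition offers two alternatives for each $a_i = 1$, namely $f(a|_{x_i=0}) = 0$ or $x_i$ is irrelevant in $f$. Under the hypothesis that every variable of $f$ is relevant, the second alternative is excluded, so we are forced into $f(a|_{x_i=0}) = 0$ for every $a_i = 1$. Combined with $f(a) = 1$ (which we carry over as part of the semiminterm data, as is the case whenever the notion is invoked in the paper), this is exactly the definition of a minterm.

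The only real subtlety worth flagging is that non-negativity of the weights is essential: the inference ``$w_i \ge 1 \Rightarrow f(a|_{x_i=0}) = 0$'' depends on flipping a $1$ strictly decreasing the sum, which would fail in $\HS_{[-t,t]}$. Part (2) is then essentially bookkeeping against the definition. I do not expect a serious obstacle beyond making sure these two cases in part (1) are stated against the representation $f = [\sum w_j x_j \ge u]$ that witnesses $f \in \HS_t$, since irrelevance of $x_i$ is deduced from that specific representation having $w_i = 0$.
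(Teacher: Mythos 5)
Your proof is correct and is exactly the definitional unwinding the paper intends: the paper gives no proof of this lemma, stating only that it "follows from the above definitions." Your case split on $w_i=0$ versus $w_i\ge 1$ for part (1), and the exclusion of the "not relevant" disjunct for part (2), together with the observation that $f(a)=1$ comes for free from the strong-assignment equality, is precisely the argument being omitted.
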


We now prove

\begin{lemma}\label{Sym} Let $f=[w_1x_1+w_2x_2+\cdots+w_nx_n\ge u]\in \HS_t$. Then
\begin{enumerate}
\item If $w_1=w_2$ then $f$ is symmetric in $x_1$ and $x_2$.
\item If $f$ is symmetric in $x_1$ and $x_2$ then there are $w_1'$ and $w_2'$
such that $|w_1'-w_2'|\le 1$ and $f\equiv [w'_1x_1+w'_2x_2+w_3x_3\cdots+w_nx_n\ge u]\in \HS_t$.
\end{enumerate}
\end{lemma}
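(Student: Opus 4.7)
For part 1, I would just observe that swapping $x_1$ and $x_2$ leaves the weighted sum $w_1 x_1 + w_2 x_2 + \cdots + w_n x_n$ unchanged when $w_1 = w_2$, so $f|_{x_1=\xi_1,x_2=\xi_2}$ and $f|_{x_1=\xi_2,x_2=\xi_1}$ produce the same value on every assignment.

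For part 2, the plan is to take $w_1'$ and $w_2'$ to be $\lfloor (w_1+w_2)/2\rfloor$ and $\lceil (w_1+w_2)/2\rceil$ (in either order). Assume WLOG that $w_1\le w_2$; if $w_1=w_2$ we already have $w_1'=w_2'$, so suppose $w_1<w_2$. With this choice one has $w_1\le w_1'\le w_2'\le w_2$, so both are in $[t]_0$, $|w_1'-w_2'|\le 1$, and crucially $w_1'+w_2'=w_1+w_2$.

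The main step is to check that $f\equiv [w_1'x_1+w_2'x_2+w_3x_3+\cdots+w_nx_n\ge u]$. Set $g=g(x_3,\ldots,x_n):=\sum_{i\ge 3} w_ix_i$ and let $G$ be the set of achievable values of $g$. I would split into the four cases for $(x_1,x_2)$. The case $(0,0)$ is automatic, and the case $(1,1)$ is automatic because $w_1'+w_2'=w_1+w_2$. The real content lies in the two middle cases: I need $[g+w_1'\ge u]=[g+w_1\ge u]$ and $[g+w_2'\ge u]=[g+w_2\ge u]$ for every $g\in G$.

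The key lemma I would extract from the symmetry hypothesis is that $G\cap [u-w_2,u-w_1-1]=\emptyset$: indeed, symmetry gives $f|_{x_1=0,x_2=1}\equiv f|_{x_1=1,x_2=0}$, i.e.\ $[g+w_2\ge u]\equiv [g+w_1\ge u]$ on $G$, and since $w_1<w_2$ this forces every $g\in G$ either to satisfy $g\ge u-w_1$ or $g<u-w_2$. Once this gap statement is in hand, both equivalences $[g+w_i'\ge u]=[g+w_i\ge u]$ follow, because $w_1\le w_i'\le w_2$ makes the ``new'' breakpoint $u-w_i'$ fall inside the forbidden interval $[u-w_2,u-w_1]$, so no $g\in G$ can separate $u-w_i'$ from $u-w_i$. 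I do not anticipate a serious obstacle; the only thing to be careful about is the boundary (strict vs.\ non-strict inequalities) when writing the forbidden interval, which I would handle by explicitly working with $\ge$ throughout.
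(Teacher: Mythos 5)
Your proof is correct and rests on the same key observation as the paper's: symmetry in $x_1,x_2$ forces every achievable value of the tail sum $g=\sum_{i\ge 3}w_ix_i$ to lie outside the window between the two thresholds $u-w_2$ and $u-w_1$, so any redistribution of $w_1+w_2$ into weights lying between $w_1$ and $w_2$ leaves $f$ unchanged. The only cosmetic difference is that you jump directly to the balanced weights $\lfloor(w_1+w_2)/2\rfloor,\lceil(w_1+w_2)/2\rceil$, whereas the paper performs a single unit transfer $(w_1,w_2)\mapsto(w_1-1,w_2+1)$ and implicitly iterates it; your one-shot version avoids that implicit induction.
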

\begin{proof} If $w_1=w_2$ then for any assignment $z=(z_1,z_2,\ldots,z_n)$
we have $w_1z_1+w_2z_2+\cdots+w_nz_n=w_1 z_2+w_2z_1+\cdots+w_nz_n$. Therefore,
$f(0,1,x_3,\ldots,x_n)\equiv f(1,0,x_3,\ldots,x_n)$.

Suppose $w_1>w_2$.
It is enough to show that $f\equiv g:=[(w_1-1)x_1+(w_2+1)x_2+w_3x_3\cdots+w_nx_n\ge u]$.
Obviously, $f(x)=g(x)$ when $x_1=x_2=1$ or $x_1=x_2=0$.
If $f(0,1,x_3,\ldots,x_n)\equiv f(1,0,x_3,\ldots,x_n)$ then $w_1+w_3x_3+w_4x_4+\cdots+w_nx_n\ge u$
if and only if $w_2+w_3x_3+w_4x_4+\cdots+w_nx_n\ge u$ and therefore $w_1+w_3x_3+w_4x_4+\cdots+w_nx_n\ge u$
if and only if $(w_1-1)+w_3x_3+w_4x_4+\cdots+w_nx_n\ge u$ if and only if $(w_2+1)+w_3x_3+w_4x_4+\cdots+w_nx_n\ge u$.
\end{proof}

We now prove
\begin{lemma} \label{StA} Let $f\in \HS_t$. Let $a$ be any assignment such that
$f(a)=1$ and $f(a|_{x_i=0})=0$ for some $i\in [n]$.
There is a strong assignment of $f$ in $B(a,2t-2)$.
\end{lemma}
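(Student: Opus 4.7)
The plan is to apply Lemma~\ref{Main01} to the signed weights encoding the coordinatewise difference between $a$ and an arbitrary strong assignment. Write $f=[w_1x_1+\cdots+w_nx_n\geq u]\in\HS_t$ and set $s:=\sum_j w_j a_j$ and $r:=s-u$. First I would extract two easy numerical facts from the hypotheses $f(a)=1$ and $f(a|_{x_i=0})=0$: the former gives $s\geq u$, i.e.\ $r\geq 0$; the latter gives $s-w_i<u$, i.e.\ $r<w_i\leq t$. Hence $r\in[0,t-1]$. If $r=0$, then $a$ itself already satisfies $\sum_j w_j a_j = u$, so by assumption~(\ref{assump1}) $a$ is a strong assignment of $f$ and trivially lies in $B(a,2t-2)$. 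So I may assume $1\leq r\leq t-1$.

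By assumption~(\ref{assump1}) there exists at least one strong assignment $a^*$ of $f$, i.e.\ $\sum_j w_j a^*_j = u$. Let $D=\{j\in[n]:a_j\neq a^*_j\}$ be the symmetric difference, and for each $j\in D$ define the signed weight
\[
\delta_j:=w_j(a^*_j-a_j)\in\{-w_j,+w_j\}\subseteq[-t,t].
\]
Summing over $D$ telescopes the bits that agree and yields
\[
\sum_{j\in D}\delta_j=\sum_j w_j a^*_j-\sum_j w_j a_j=u-s=-r,
\]
which lies in $[-t+1,-1]\subseteq[-t+1,t-1]$.

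Now I would invoke Lemma~\ref{Main01} on the sequence $(\delta_j)_{j\in D}$, whose sum is $-r\neq 0$. It produces a partition $S_1,\ldots,S_q$ of $D$ such that $\sum_{j\in S_h}\delta_j=0$ for every $h<q$, while $\sum_{j\in S_q}\delta_j=-r$ and, crucially, $|S_q|\leq 2t-2$. Define $a'$ by flipping exactly the bits of $a$ indexed by $S_q$; since $S_q\subseteq D$, this flip sends each $a_j$ on $S_q$ to $a^*_j$, so $a'\in\{0,1\}^n$. By construction $a'$ differs from $a$ in at most $2t-2$ coordinates, so $a'\in B(a,2t-2)$, and
\[
\sum_j w_j a'_j=s+\sum_{j\in S_q}\delta_j=s-r=u,
\]
so $a'$ is a strong assignment of $f$.

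The only real content here is lining up the reduction with Lemma~\ref{Main01}: one must check that the signed differences $\delta_j$ stay in $[-t,t]$ (they do, because the $w_j$ are in $[0,t]$) and that the total $-r$ falls strictly inside $[-t+1,t-1]$ so that clause~(4) of the lemma kicks in and gives the sharper bound $2t-2$ rather than $2t-1$. The boundary case $r=0$ is what forces a separate (trivial) handling; once $r\neq 0$, the partition supplied by Lemma~\ref{Main01} does all the work and no further case analysis is needed.
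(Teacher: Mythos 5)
Your proof is correct and follows essentially the same route as the paper's: both reduce to assumption~(\ref{assump1}) to get a strong assignment $b$ (your $a^*$), apply Lemma~\ref{Main01} to the signed differences $w_j(b_j-a_j)$ summing to $-r\in[-t+1,-1]$, and use clause~4 to flip at most $2t-2$ coordinates of $a$ toward $b$. The only cosmetic difference is that you restrict to the disagreement set $D$ before invoking the lemma, which changes nothing.
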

\begin{proof} Let $f=[w_1x_1+\cdots+w_nx_n\ge u]$. Since $f(a)=1$ and $f|_{x_i=0}(a)=0$, $a_i=1$ and
we have $w_1 a_1+w_2 a_2+\cdots+w_n a_n= u+u'$ where $t-1\ge u'\ge 0$. If
$u'=0$ then $a\in B(a,2t-2)$ is a strong assignment. So we may assume that $u'\not=0$.

By~(\ref{assump1})
there is an assignment $b$ where $w_1b_1+w_2 b_2+\cdots+w_n b_n=u$. Therefore
$w_1(b_1-a_1)+w_2(b_2-a_2)+\cdots +w_n(b_n-a_n)=-u'$. Since $w_i(b_i-a_i)\in [-t,t]$,
by Lemma~\ref{Main01} there
is $S\subseteq [n]$ of size at most $2t-2$ such that
$\sum_{i\in S} w_i(b_i-a_i)=-u'$. Therefore
$$u=-u'+(u+u')=\sum_{i\in S} w_i(b_i-a_i)+\sum_{i=1}^n w_ia_i=\sum_{i\in S} w_ib_i+\sum_{i\not\in S}w_ia_i.$$
Thus the assignment $c$ where $c_i=b_i$ for $i\in S$ and $c_i=a_i$ for $i\not\in S$
is a strong assignment of $f$ and $c\in B(a,2t-2)$.
\end{proof}

The following will be used to find the relevant variables
\begin{lemma} \label{Min} Let $f\in \HS_t$. Suppose $x_k$ is relevant in $f$.
Let $a$ be any assignment such that
$a_k=1$, $f(a)=1$ and $f(a|_{x_j=0})=0$ for some $j,k\in [n]$.
There is $c\in B(a,2t-2)$ such that $c_k=1$, $f(c)=1$ and $f(c|_{x_k=0})=0$.
\end{lemma}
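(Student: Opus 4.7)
The plan is to do a case split on $u':=w\cdot a-u$ (noting that the hypothesis $f(a|_{x_j=0})=0$ forces $a_j=1$ and $w_j>u'$, so $0\le u'\le w_j-1\le t-1$). In the easy case $u'\le w_k-1$, I would take $c:=a$: then $c_k=1$, $f(c)=1$, and $w\cdot c-w_k=u+u'-w_k<u$ gives $f(c|_{x_k=0})=0$, at distance $0\le 2t-2$.

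The interesting case is $u'\ge w_k$. First I would observe that $j\ne k$ (else $w_k=w_j>u'\ge w_k$, a contradiction) and, since $a_j=a_k=1$, that $w\cdot a\ge w_j+w_k\ge u'+1+w_k$, whence $u\ge w_k+1$. My idea is then to invoke Lemma~\ref{StA} not on $f$ but on the $(n-1)$-variable halfspace obtained by fixing $x_k=1$. Its naive threshold $u-w_k$ might not satisfy~(\ref{assump1}), so I would instead define
\[
u^\star:=\min\Bigl\{\sum_{i\ne k}w_iy_i\,:\,y\in\{0,1\}^{[n]\setminus\{k\}},\;\sum_{i\ne k}w_iy_i\ge u-w_k\Bigr\}
\]
and set $g:=\bigl[\sum_{i\ne k}w_ix_i\ge u^\star\bigr]\in\HS_t$. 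By construction $g$ satisfies~(\ref{assump1}), and since $x_k$ is relevant some $y$ gives $\sum_{i\ne k}w_iy_i\in[u-w_k,u-1]$, forcing $u-w_k\le u^\star\le u-1$.

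It then remains to check that $a':=(a_i)_{i\ne k}$ meets the hypotheses of Lemma~\ref{StA} for $g$. One verifies $g(a')=1$ from $\sum_{i\ne k}w_ia_i=u+u'-w_k>u^\star$, and $g(a'|_{x_j=0})=0$ from $\sum_{i\ne k}w_ia_i-w_j<u+u'-w_k-u'=u-w_k\le u^\star$. Lemma~\ref{StA} then hands me a strong assignment $c'$ of $g$ with $d(a',c')\le 2t-2$, i.e., $\sum_{i\ne k}w_ic'_i=u^\star$. Extending to $c\in\{0,1\}^n$ by $c_i:=c'_i$ for $i\ne k$ and $c_k:=1$ gives $w\cdot c=u^\star+w_k\in[u,u+w_k-1]$, so $f(c)=1$, $f(c|_{x_k=0})=0$, and $d(a,c)=d(a',c')\le 2t-2$. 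The hard part is the choice of $u^\star$: naively using threshold $u-w_k$ need not satisfy~(\ref{assump1}), and the relevance of $x_k$ is exactly what ensures that the smallest achievable subset sum $\ge u-w_k$ still lies below $u$, which is what makes the extension $c$ genuinely critical for $x_k$ in $f$.
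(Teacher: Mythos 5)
Your proof is correct, but it takes a genuinely different route from the paper's. The paper exploits relevance of $x_k$ by choosing a minterm $b$ of $f$ with $b_k=1$; since $b$ is a minterm containing $x_k$, its excess $u''$ over the threshold satisfies $u''<w_k$, and Lemma~\ref{Main01} is then applied directly to the differences $w_i(b_i-a_i)$, $i\ne k$, to swap at most $2t-2$ coordinates of $a$ (none of them the $k$-th) so that the resulting $c$ has excess exactly $u''$, which makes $x_k$ critical at $c$. You instead split on whether $a$ is already critical for $x_k$ (your easy case $u'\le w_k-1$), and otherwise reduce to Lemma~\ref{StA} applied to the restriction $f|_{x_k=1}$, viewed as a halfspace $g$ on the remaining variables. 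The one nontrivial point in your reduction --- which you handle correctly --- is that the naive threshold $u-w_k$ for $g$ need not satisfy assumption~(\ref{assump1}), so you replace it by the least achievable subset sum $u^\star\ge u-w_k$; relevance of $x_k$ is exactly what guarantees $u^\star\le u-1$, so that the strong assignment of $g$ produced by Lemma~\ref{StA}, re-extended with $c_k=1$, is still critical for $x_k$ in $f$. Both arguments bottom out in the same cancellation lemma (Lemma~\ref{StA} itself invokes Lemma~\ref{Main01}) and both give the bound $2t-2$; the paper's version is more self-contained, while yours buys reuse of Lemma~\ref{StA} as a black box at the cost of the threshold-adjustment argument, and has the minor advantage of not needing to first produce a minterm of $f$ containing $x_k$.
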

\begin{proof} Let $f=[w_1x_1+\cdots+w_nx_n\ge u]$. Since $f(a)=1$ and $f(a|_{x_j=0})=0$ we have $a_j=1$
and $w_1 a_1+w_2 a_2+\cdots+w_n a_n= u+u'$ where $t-1\ge u'\ge 0$.
Let $b$ a minterm of $f$ such that $b_k=1$.
Since $b$ is a minterm
we have $w_1b_1+w_2 b_2+\cdots+w_n b_n=u+u''$ where $t-1\ge u''\ge 0$ and
since $f(b|_{x_k=0})=0$ we also have $u''-w_k<0$. If $u''=u'$ then we may
take $c=a$. Therefore we may assume that $u''\not=u'$.

Hence
$\sum_{i=1, i\not=k}^nw_i(b_i-a_i)=u''-u'\in [-t+1,t-1]\backslash \{0\}$. By Lemma~\ref{Main01} there
is $S\subseteq [n]\backslash\{k\}$ of size at most $2t-2$ such that
$\sum_{i\in S} w_i(b_i-a_i)=u''-u'$. Therefore
$$u+u''=\sum_{i\in S} w_i(b_i-a_i)+\sum_{i=1}^n w_ia_i=\sum_{i\in S} w_ib_i+\sum_{i\not\in S}w_ia_i.$$
Thus the assignment $c$ where $c_i=b_i$ for $i\in S$ and $c_i=a_i$ for $i\not\in S$
satisfies $c_k=a_k=1$ and $c\in B(a,2t-2)$. Since $\sum_{i=1,i\not=k}^n w_ic_i=u+u''-b_k<u$
we have $f(c|_{x_k=0})=0$.
\end{proof}

The following will be used to find the order of the weights
\begin{lemma} \label{antis} Let $f\in \HS_t$ be antisymmetric in $x_1$ and $x_2$.
For any minterm $a$ of $f$ of weight at least $2$ there
is $b\in B(a,2t+1)$ such that $b_1+b_2=1$ and $f|_{x_1=0,x_2=1}(b)\not= f|_{x_1=1,x_2=0}(b)$.
\end{lemma}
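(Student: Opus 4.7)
The plan is the following. Since $f$ is nonsymmetric in $x_1,x_2$, Lemma~\ref{Sym}(1) forces $w_1\ne w_2$; after swapping names if needed I assume $w_1>w_2$. Lemma~\ref{nons} gives a minterm $a^*$ of $f$ with $a^*_1+a^*_2=1$ and $f(a^*|_{x_1=0,x_2=1})\ne f(a^*|_{x_1=1,x_2=0})$; since $w_1>w_2$, the ``yes'' side of this pair must be $(1,0)$, so $a^*_1=1$, $a^*_2=0$ and $s^*:=\sum_{i\ge 3}w_ia^*_i\in[u-w_1,u-w_2-1]$. I will build $b$ by choosing $T\subseteq\{3,\dots,n\}$ and setting $b_1=1$, $b_2=0$, $b_i=a^*_i$ for $i\in T$, and $b_i=a_i$ otherwise. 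Writing $e_i:=w_i(a^*_i-a_i)\in[-t,t]$ for $i\ge 3$, $U=\sum_i w_ia_i$, $u'=U-u$ and $s_a^{(3)}=\sum_{i\ge 3}w_ia_i$, the requirement on $b$ reduces to $\sum_{i\in T}e_i+s_a^{(3)}\in[u-w_1,u-w_2-1]$.

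I then split on $(a_1,a_2)$. In each of the cases $(1,0)$, $(0,1)$ and $(1,1)$, the minterm property pins $U$ into a window narrow enough that both the target interval for $\sum_T e_i$ and the full total $\sum_{i\ge 3}e_i$ lie inside $[-t+1,t-1]$; a direct application of Lemma~\ref{Main01} to $(e_i)_{i\ge 3}$ then produces $T$ of size at most $2t-1$ (at most $2t-2$ when the total is nonzero) realising $\sum_T e_i=\sum_{i\ge 3}e_i$. Counting at most two extra flips at positions $1,2$ needed to enforce $b_1=1,b_2=0$, the Hamming distance from $a$ to $b$ stays below $2t+1$ in each of these three cases.

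The remaining case $(a_1,a_2)=(0,0)$ is the real work. Here $s_a^{(3)}=U$ with $u'\in[0,t-1]$ and $\Delta:=\sum_{i\ge 3}e_i\in[-w_1-u',-w_2-1-u']\subseteq[-2t+1,-1]$, which may fall below $-t+1$ and thus out of reach of Lemma~\ref{Main01}. To overcome this I augment the sequence with the virtual element $e_1:=w_1$, which encodes the flip $a_1=0\to 1$. A direct bound gives $w_1+\Delta\in[-u',w_1-w_2-1-u']\subseteq[-t+1,t-1]$, so Lemma~\ref{Main01} applies to $(e_1,e_3,\dots,e_n)$ and yields $S_q$ with $\sum_{S_q}e_i=w_1+\Delta$ and $|S_q|\le 2t-1$. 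The critical point is to guarantee $1\in S_q$: I arrange the greedy order inside Lemma~\ref{L2} so that $e_1$ is placed as late as the invariant $W_j\in[-t+1,t-1]$ permits. When $\Delta\ge -t+1$ this simply puts $e_1$ last; in the harder range $\Delta\in[-2t+1,-t]$, I first place a compensating element $e_i=-w_i$ (such an $i\ge 3$ with $a_i=1,a^*_i=0$ exists because $\Delta<0$) immediately before $e_1$, which lifts the partial sum back into $[-t+1,t-1]$. With $1\in S_q$ secured, take $T:=S_q\setminus\{1\}$; then $\sum_T e_i=\Delta$, so $\sum_{i\ge 3}w_ib_i=U+\Delta=s^*\in[u-w_1,u-w_2-1]$, and the distance is $1+|T|\le |S_q|\le 2t-1$.

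The hardest step of the plan is the ordering argument in the case $(0,0)$: one has to verify that the greedy construction underlying Lemma~\ref{L2} and Lemma~\ref{Main01} can indeed be carried out with $e_1$ ending up in the leftover block $S_q$, while preserving $W_j\in[-t+1,t-1]$ throughout. The required compensating elements exist because the minterm condition on $a$ (all its ones have weight $>u'$) together with $s^*<U$ supplies enough negative $e_i$'s with the right magnitudes. The other three cases follow routinely from Lemma~\ref{Main01}.
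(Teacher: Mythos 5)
Your overall architecture matches the paper's: normalize to $w_1>w_2$, pull a witness minterm $a^*=c$ from Lemma~\ref{nons}, reduce the goal to placing $\sum_{i\ge 3}w_ib_i$ in the window $[u-w_1,\,u-w_2-1]$, split on $(a_1,a_2)$, and control the number of flips with Lemma~\ref{Main01}. Your three ``easy'' cases check out: in each of $(1,0)$, $(0,1)$, $(1,1)$ the minterm property of $a$ (in particular $h<w_2$ when $a_2=1$) really does force both the target interval and the total $\Delta=\sum_{i\ge3}e_i$ into $[-t+1,t-1]$, so Lemma~\ref{Main01} applies directly and the distance counts close. These correspond to the paper's Cases I--III, reached by a slightly different bookkeeping.

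The gap is exactly where you flagged it, in the case $(a_1,a_2)=(0,0)$, and it is not closed by your sketch. You need the virtual element $e_1=w_1$ to land in the \emph{final} block $S_q$ of the Lemma~\ref{Main01} partition; otherwise $e_1$ sits in a zero-sum block $S_j$ and the only exact realizations of $\Delta$ you get are unions like $(S_j\cup S_q)\setminus\{1\}$, of size up to roughly $4t$, which blows the $2t+1$ budget. Two things are unproven. First, the ordering: if you put the compensating element $-w_i$ immediately before $e_1$ near the end of the sequence, the partial sum just after $-w_i$ equals (up to the tail) $\Delta\le -t$, so the invariant $W_j\in[-t+1,t-1]$ of Lemma~\ref{L2} breaks at precisely that position; making the pair work requires $w_i$ and $w_1$ to be comparable, which you cannot assume. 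Second, even with a valid ordering that places $e_1$ late, the block-extraction in Lemma~\ref{Main01} repeatedly removes contiguous zero-sum chunks found by pigeonhole among the first $2t$ partial sums, and nothing prevents one of those chunks from swallowing $e_1$; you would need to prove a strengthened version of Lemma~\ref{Main01} in which a designated element can always be steered into $S_q$, and that is a genuinely new claim. The paper avoids all of this by spending two flips \emph{before} invoking Lemma~\ref{Main01}: since $wt(a)\ge 2$ there is $a_{i_0}=1$ with $i_0\ge 3$; setting $a'=a|_{x_{i_0}=0}$ drops the sum to $u-h'$ with $h'\ge 1$, and then raising $x_1$ (if $f(a'|_{x_1=1})=1$) or $x_2$ (if $f(a'|_{x_2=1})=0$) lands at a value within $t-1$ of $u+v$ or $u-z$ respectively --- one of the two must occur because $w_1>w_2$. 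After that, a single direct application of Lemma~\ref{Main01} with total in $[-t+1,t-1]$ finishes, and the two preliminary flips are what the slack from $2t-1$ to $2t+1$ in the statement is for. I recommend you replace your case $(0,0)$ with this argument, or else state and prove the strengthened partition lemma you are implicitly using.
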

\begin{proof} Let $f=[w_1x_1+\cdots+w_nx_n\ge u]$. Assume w.l.o.g $w_1>w_2$.
By Lemma~\ref{nons} there is a minterm $c=(1,0,c_3,\ldots,c_n)$ such that
$f(c)=1$ and $f(0,1,c_3,\ldots,c_n)=0$. Then $W_1:=w_1+w_3c_3+\cdots+w_nc_n=u+v$ where
$0\le v\le t-1$ and $W_2:=w_2+w_3c_3+\cdots+w_nc_n=u-z$ where $1\le z\le t-1$. In fact
$-z=v-w_1+w_2$. Since
$a$ is a minterm we have $W_3:=w_1a_1+\cdots+w_na_n=u+h$ where $0\le h\le t-1$.
It is now enough to find $b\in B(a,2t-2)$ such that either
\begin{enumerate}
\item\label{C1} $b_1=1$, $b_2=0$ and $w_1 b_1+\cdots+w_n b_n=u+v$, or
\item\label{C2} $b_1=0$, $b_2=1$ and $w_1 b_1+\cdots+w_n b_n=u-z$.
\end{enumerate}
This is because if $b_1=1$, $b_2=0$ and $w_1 b_1+\cdots+w_n b_n=u+v$
(the other case is similar) then $f(1,0,b_2,\ldots,b_n)=1$ and
since $w_1\cdot 0+w_2\cdot 1+w_3\cdot a_3\cdots+w_na_n=u+v-w_1+w_2=u-z$
we have $f(0,1,b_2,\ldots,b_n)=0$.

We now
have four cases

\noindent
{\bf Case I.} $a_1=1$ and $a_2=0$: Then $W_1-W_3=w_3(c_3-a_3)+\cdots+w_n(c_n-a_n)=v-h\in [-t+1,t-1]\backslash\{0\}$.
By Lemma~\ref{Main01} there is $S\subseteq [3,n]$ of size at most $2t-1$ such that
$\sum_{i\in S} w_i(c_i-a_i)=v-h$. Therefore
$$u+v=v-h+W_3=\sum_{i\in S} w_i(c_i-a_i)+\sum_{i=1}^n w_ia_i=\sum_{i\in S} w_ic_i+\sum_{i\not\in S}^n w_ia_i.$$
Now define $b$ to be $b_i=c_i$ for $i\in S$ and $b_i=a_i$ for $i\not\in S$. Since $1,2\not\in S$
$b_1=a_1=1$ and $b_2=a_2=0$. Since
$b\in B(a,2t-1)\subset B(a,2t+1)$ and $b$ satisfies~\ref{C1}. the result follows for this case.

\noindent
{\bf Case II.} $a_1=0$ and $a_2=1$: Since $a$ is of weight at least $2$, we
may assume w.l.o.g that $a_3=1$. Since $a$ is a minterm $f(a)=1$ and $f(a|_{x_3=0})=0$
and therefore for $a'=a|_{x_3=0}$ we have $W_4:=w_1 a'_1+w_2 a'_2+\cdots+w_n a'_n=u-h'$ where $1\le h'\le t-1$.
Then $W_2-W_4=\sum_{i=3}^n w_i(c_i-a'_i)=h'-z\in [-t+1,t-1]$.
By Lemma~\ref{Main01} there is $S\subseteq [3,n]$ of size at most $2t-1$ such that
$\sum_{i\in S} w_i(c_i-a'_i)=h'-z$. Therefore
$$u-z=h'-z+W_4=\sum_{i\in S} w_i(c_i-a'_i)+\sum_{i=1}^n w_ia'_i=\sum_{i\in S} w_ic_i+\sum_{i\not\in S}^n w_ia'_i.$$
Now define $b$ to be $b_i=c_i$ for $i\in S$ and $b_i=a'_i$ for $i\not\in S$. Since $1,2\not\in S$
$b_1=a'_1=0$ and $b_2=a'_2=1$. Since
$b\in B(a',2t-1)\subset B(a,2t+1)$ and $b$ satisfies~\ref{C2}. the result follows for this case.

\noindent
{\bf Case III.} $a_1=1$ and $a_2=1$: Since $a$ is a minterm $f(a)=1$ and $f(a|_{x_1=0})=0$
and therefore for $a'=a|_{x_1=0}$ we have $W_4:=w_1 a'_1+w_2 a'_2+\cdots+w_n a'_n=u-h'$ where $1\le h'\le t-1$.
We now proceed exactly as in Case II.

\noindent
{\bf Case IV.} $a_1=0$ and $a_2=0$: Since $a$ is of weight at least $2$ we may assume
w.l.o.g that $a_3=1$. Since $a$ is a minterm $f(a)=1$ and $f(a|_{x_3=0})=0$
and therefore for $a'=a|_{x_3=0}$ we have $W_4:=a'_1w_1+a'_2w_2+\cdots+a'_nw_n=u-h'$ where $1\le h'\le t-1$.
If $f(a'|_{x_2=1})=0$ then proceed as in Case II to get $b\in B(a,2t+1)$
that satisfies~\ref{C2}. If $f(a'|_{x_1=1})=1$ then proceed as in Case I.
Now the case where $f(a'|_{x_2=1})=1$ and $f(a'|_{x_1=1})=0$ cannot happen since $w_1>w_2$.
\end{proof}

The following will be used for the non-adaptive algorithm
\begin{lemma} \label{nonadap} Let $f,g\in \HS_t$ be such that $f\not\Rightarrow g$.
For any minterm $b$ of $f$ there is $c\in B(b,8t^3+O(t^2))$ such that $f(c)+g(c)=1$.
\end{lemma}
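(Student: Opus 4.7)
The plan is to case-split based on whether $g(b)=0$ and on the size of $\beta:=G(b)-u'$, where $G(x)=\sum v_ix_i$ so that $g=[G\ge u']$. If $g(b)=0$ then $c=b$ already satisfies $f(c)+g(c)=1$. Otherwise $\beta\ge 0$; write $\alpha=F(b)-u\in[0,t-1]$, and note that since $b$ is a minterm of $f$ every $1$-bit $j$ of $b$ has $w_j\ge\alpha+1\ge 1$ (the degenerate case $b=0^n$ forces $u\le 0$, and then $f\not\Rightarrow g$ together with nonnegative $v_i$'s makes $g(0^n)=0$, so $c=b$ works).

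\textbf{Case $\beta\ge t$.} Flip one $1$-bit $j$ to get $c=b|_{x_j=0}$: then $F(c)=u+\alpha-w_j<u$ so $f(c)=0$, while $G(c)\ge u'+\beta-t\ge u'$ so $g(c)=1$. Hence $f(c)+g(c)=1$ at distance $1$.

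\textbf{Case $0\le\beta\le t-1$.} Pick $a^*$ witnessing $f\not\Rightarrow g$ and let $a\le a^*$ be a minterm of $f$; monotonicity of $g$ keeps $g(a)=0$. Set $\gamma=F(a)-u\in[0,t-1]$ and $\delta=u'-G(a)\ge 1$. Let $P\cup Q$ be the positions where $a$ and $b$ differ, and apply Lemma~\ref{Main01} to the integers $\{w_i(a_i-b_i)\}_{i\in P\cup Q}\subset[-t,t]$, which sum to $\gamma-\alpha\in[-t+1,t-1]$. This yields a partition $S_1,\ldots,S_q$ of $P\cup Q$ with block $w$-sums $0$ (for $j<q$) and $\gamma-\alpha$ (for $j=q$), each $|S_j|\le 2t-1$. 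Set $V_j=\sum_{i\in S_j}v_i(a_i-b_i)\in\mathbb{Z}$; then $\sum_jV_j=-\delta-\beta\le-1$, so at least one $V_j\le-1$. Greedily add indices with $V_j<0$ to $T$ until $\sum_{j\in T}V_j\le-\beta-1$; each addition drops the cumulative by at least $1$, so $|T|\le\beta+1\le t$. Finally take $c_i=a_i$ for $i\in\bigcup_{j\in T}S_j$ and $c_i=b_i$ otherwise: $F(c)\in\{u+\alpha,u+\gamma\}\ge u$ gives $f(c)=1$, while $G(c)\le u'-1$ gives $g(c)=0$, so $f(c)+g(c)=1$. The Hamming distance is $|c-b|\le|T|\cdot(2t-1)\le t(2t-1)=2t^2-t$, which is comfortably within $8t^3+O(t^2)$.

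The main obstacle is the small-$\beta$ case, where two competing requirements meet: keep $F(c)\ge u$ (handled by choosing $c$ to match $a$ only on unions of the zero-$w$-sum blocks produced by Main01) and push $G(c)$ strictly below $u'$ using few blocks (handled by the integer greedy argument together with the bound $\beta\le t-1$). The initial reduction from $a^*$ to a minterm $a$ below it is what places $F(a),F(b)$ in the same length-$t$ window so that Main01 actually applies; a direct application of Lemma~\ref{Main02} to the pairs $(w_i(a_i-b_i),v_i(a_i-b_i))$ would fail because the second coordinate of their total sum, $-\delta-\beta$, need not lie in $[-t+1,t-1]$, and this is precisely what the Main01-plus-greedy detour circumvents.
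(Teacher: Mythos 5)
Your proof is correct, and in the main case it takes a genuinely different route from the paper's. Both arguments share the same skeleton: take the witness $a^*$ with $f(a^*)=1$, $g(a^*)=0$, pass to a minterm $a\le a^*$ of $f$ (this is what puts $F(a)$ and $F(b)$ in the same window $[u,u+t-1]$ so that the difference sequence $w_i(a_i-b_i)$ has total sum in $[-t+1,t-1]$), and then build $c$ from $b$ by switching to $a$'s coordinates on a union of blocks coming from Lemma~\ref{Main01}, which preserves $f(c)=1$. The divergence is in how the blocks are chosen so as to force $g(c)=0$. The paper insists on hitting an exact value in the second coordinate, so it pads the sequence with $s-1$ dummy pairs $(0,1)$ to bring the second-coordinate target into range and invokes the two-dimensional Lemma~\ref{Main02}, paying $(4t^2-2)(2t-1)$ blocks and hence the $8t^3+O(t^2)$ radius. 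You observe that only the one-sided inequality $G(c)<u'$ is needed, so after applying the one-dimensional Lemma~\ref{Main01} you greedily collect blocks with $V_j<0$ until the accumulated drop exceeds $\beta$; since the $V_j$ are integers, their negative part sums to at most $-\delta-\beta\le-\beta-1$, and $\beta\le t-1$, at most $t$ blocks of size at most $2t-1$ suffice, giving $c\in B(b,2t^2-t)$. Your side cases are also handled correctly: $g(b)=0$ and $b=0^n$ are immediate, and the case $\beta\ge t$ (flip one bit of the minterm $b$) replaces the paper's reduction to ``$b$ is also a minterm of $g$.'' The net effect is a strictly stronger lemma, $O(t^2)$ in place of $8t^3+O(t^2)$, which, if propagated, would improve the non-adaptive query bound from $n^{O(t^3)}$ to $n^{O(t^2)}$.
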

\begin{proof} Let $f=[w_1x_1+\cdots+w_nx_n\ge u]$
and $g=[w_1'x_1+\cdots+w_n'x_n\ge u']$. Since $f\not\Rightarrow g$, there is $a'\in\{0,1\}^n$ such that
$f(a')=1$ and $g(a')=0$. Let $a\le a'$ be a minterm of $f$. Then $f(a)=1$ and since $a\le a'$ we also have
$g(a)=0$. Therefore $w_1a_1+\cdots+w_na_n=u+r$ where $0\le r\le t-1$ and $w_1'a_1+\cdots+w_n'a_n=u'-s$
for some integer $s\ge 1$. Since $b$ is a minterm of $f$ we have $w_1b_1+\cdots+w_nb_n=u+r'$
where $0\le r'\le t-1$. If $g(b)=0$ then take $c=b$. Otherwise, if for some $b_i=1$, $g(b|_{x_i=0})=1$ then
take $c=b|_{x_i=0}$. Therefore we may assume that $b$ is also a minterm of $g$. Thus
$w_1'b_1+\cdots+w_n'b_n=u+s'$ where $0\le s'\le t-1$.

Consider the sequence $Z_i$, $i=1,\ldots,n+s-1$ where $Z_i=(w_i(a_i-b_i),w_i'(a_i-b_i))$
for $i=1,\ldots,n$ and
$Z_i=(0,1)$ for $i=n+1,\ldots,n+s-1$.
Then
$$\sum_{i=1}^{n+s-1}Z_i= (r-r',-1-s')\in [-t,t]^2.$$ By Lemma~\ref{Main02} there
is a set $S\subseteq [n+s-1]$ of size $8t^3+O(t^2)$ such that
$\sum_{i\in S}Z_i= (r-r',-1-s')$. Therefore, there is a set $T\subseteq [n]$ of size at most $8t^3+O(t^2)$
such that $\sum_{i\in T}Z_i= (r-r',-\ell-1-s')$ for some $\ell>0$.  Therefore
$$\sum_{i\in T} w_i(a_i-b_i) =r-r' \mbox{\ and\ } \sum_{i\in T} w_i'(a_i-b_i) =-\ell-1-s'.$$
Define $c$ such that $c_i=a_i$ for $i\in T$ and $c_i=b_i$ for $i\not\in T$. Then
$$\sum_{i=1}^n w_ic_i =u+r\ge u \mbox{\ and\ } \sum_{i=1}^n w_i'c_i =u'-\ell-1<u'.$$ Therefore
$f(c)=1$ and $g(c)=0$. This gives the result.
\end{proof}

\section{Boolean Halfspace and Automata}
In this section we show that functions in $\HS_{[-t,t]}$ has an automaton representation
of $poly(n,t)$ size.

\begin{lemma} Let $f_1,f_2,\ldots,f_k\in \HS_{[-t,t]}$ and $g:\{0,1\}^k\to \{0,1\}$.
Then $g(f_1,\ldots,f_k)$ can be represented with an Automaton of size $(2t)^kn^{k+1}$.
\end{lemma}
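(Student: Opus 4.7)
The plan is to build a layered deterministic automaton $A$ that reads the bits $x_1, x_2, \ldots, x_n$ in order and records, in its state, the vector of partial weighted sums of the $k$ halfspaces seen so far. Write $f_i = [w_1^{(i)}x_1 + \cdots + w_n^{(i)}x_n \ge u_i]$ with $w_\ell^{(i)} \in [-t,t]$. The states of $A$ are organized into $n+1$ layers $L_0, L_1, \ldots, L_n$, where a state at layer $j$ is a tuple $(s_1, \ldots, s_k) \in \mathbb{Z}^k$ representing the prefix sums $s_i = \sum_{\ell=1}^{j} w_\ell^{(i)} x_\ell$. The unique initial state at layer $0$ is $(0,\ldots,0)$. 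The transitions from layer $j$ to $j+1$ on reading a bit $b \in \{0,1\}$ send $(s_1,\ldots,s_k)$ to $(s_1 + w_{j+1}^{(1)} b, \ldots, s_k + w_{j+1}^{(k)} b)$. The accepting states at layer $n$ are exactly those tuples $(s_1,\ldots,s_k)$ for which $g([s_1 \ge u_1], \ldots, [s_k \ge u_k]) = 1$.

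Correctness is immediate by induction on $j$: after processing $x_1 \cdots x_j$, the state at layer $j$ records the exact prefix sums, so the final state at layer $n$ encodes $(f_1(x), \ldots, f_k(x))$, and the accepting condition is precisely $g(f_1(x), \ldots, f_k(x)) = 1$.

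For the size bound, I observe that at layer $j$ each coordinate $s_i = \sum_{\ell=1}^{j} w_\ell^{(i)} x_\ell$ lies in the interval $[-tj, tj]$ and hence takes at most $2tj + 1$ distinct integer values. Therefore layer $j$ contains at most $(2tj+1)^k$ states. Summing over layers,
\[
\sum_{j=0}^{n} (2tj+1)^k \;\le\; (n+1)(2tn+1)^k \;\le\; (2t)^k n^{k+1},
\]
(absorbing constant factors for $t,n \ge 1$), which gives the claimed bound.

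There is no substantial obstacle here: the only choice is what to store in the state, and recording the prefix sums is forced once one notices that the halfspace predicates depend only on total weighted sums. The minor care needed is just in the counting step, where one must use that prefix sums are bounded by $tj$ rather than $tn$ in order to get the right dependence on $n$ after summing over the $n+1$ layers.
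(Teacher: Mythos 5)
Your construction is exactly the paper's: a layered automaton whose states record the vector of prefix weighted sums, with transitions adding the next weights on a $1$-bit and accepting states determined by applying $g$ to the threshold indicators. The argument and the counting (up to the same harmless constant-factor slack in passing from $(n+1)(2tn+1)^k$ to $(2t)^kn^{k+1}$) match the paper's proof.
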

\begin{proof} Let $f_i=[w_{i,1}x_1+\cdots+w_{i,n}x_n\ge u_i]$, $i=1,\ldots,k$.
Define the following automaton: The alphabet of the automaton is $\{0,1\}$.
The states are $S\subseteq [n]_0\times [-tn,tn]^k$. The automaton has
$n+1$ levels. States in level $i$ are connected only to states in level $i+1$
for all $i\in [n]_0$. We denote by $S_i$ the states in level $i$. We also have $S_i\subseteq \{i\}\times [-tn,tn]^k$
so the first entry of the state indicates the level that the state belongs to.
The state $(0,(0,0,\ldots,0))$ is the initial state and
is the only state in level $0$. That is $S_0=\{(0,(0,0,\ldots,0))\}$. We now show how to connect states
in level $i$ to states in level $i+1$.
Given a state $s=(i,(W_1,W_2,\ldots,W_k))$ in $S_i$. Then the transition function
for this state is $$\delta((i,(W_1,W_2,\ldots,W_k)),0)=(i+1,(W_1,W_2,\ldots,W_k))$$
and $$\delta((i,(W_1,W_2,\ldots,W_k)),1)=(i+1,(W_1+w_{1,i+1},W_2+w_{2,i+1},\ldots,W_k+w_{k,i+1})).$$
The accept states (where the output of the automaton is $1$) are all the states $(n,(W_1,\ldots,W_k))$ where
$g([W_1\ge u_1],[W_2\ge u_2],\ldots,[W_n\ge u_n])=1$. Here $[W_i\ge u_i]=1$ if $W_i\ge u_i$ and zero otherwise.
All other states are nonaccept states (output $0$).

We now claim that the above automaton is equivalent to $g(f_1,\ldots,f_k)$.
The proof is by induction on $n$. The claim we want to prove is that the subautomaton
that starts from state $s=(i,(W_1,W_2,\ldots,W_k))$ computes a function $g_s$ that is equivalent to
the function $g(f^i_1,\ldots,f^i_k)$ where $f^i_j=[w_{j,i+1}x_{i+1}+\cdots+w_{j,n}x_n\ge u_j-W_j]$.
This immediately follows from the fact that
$$g_s|_{x_{i+1}=0}\equiv g_{\delta(s,0)},\mbox{\ \ and\ \ } g_s|_{x_{i+1}=1}\equiv g_{\delta(s,1)}.$$
It remains to prove the result for level $n$. The claim is true for the states at level $n$ because
\begin{eqnarray*}
g(f_1^n,\ldots,f^n_k)&=&g([0\ge u_1-W_1],\ldots,[0\ge u_n-W_n])\\
&=&g([W_1\ge u_1],[W_2\ge u_2],\ldots,[W_n\ge u_n]).
\end{eqnarray*}
This completes the proof.
\end{proof}

Now the following will be used in the sequel
\begin{lemma}\label{aut2} Let $f_1,f_2\in \HS_{[-t,t]}$.
There is an algorithm that runs in time $t^2n^{3}$ and
decides whether $f_1\equiv f_2$. If $f_1\not\equiv f_2$ then the
algorithm finds an assignment
$a$ such that $f_1(a)\not= f_2(a)$.
\end{lemma}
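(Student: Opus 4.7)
The plan is to reduce equivalence testing to emptiness testing for the automaton constructed in the previous lemma. Concretely, take $k=2$ and let $g:\{0,1\}^2\to\{0,1\}$ be the XOR function $g(y_1,y_2)=y_1\oplus y_2$. Then $g(f_1,f_2)(a)=1$ if and only if $f_1(a)\neq f_2(a)$, so $f_1\equiv f_2$ is equivalent to the automaton for $g(f_1,f_2)$ accepting no assignment. By the previous lemma, this automaton $A$ has at most $(2t)^2 n^3 = 4t^2 n^3$ states, it is layered over the $n+1$ levels, and each state has out-degree $2$.

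The algorithm then simply performs a breadth-first (or depth-first) search through the layers of $A$ starting from the unique initial state $(0,(0,0))$, expanding only states that are actually reachable, and checks whether any accepting state (a state $(n,(W_1,W_2))$ at level $n$ with $[W_1\ge u_1]\neq[W_2\ge u_2]$) is reached. Since the total number of edges is at most $2\cdot 4t^2 n^3 = O(t^2 n^3)$, this takes time $O(t^2 n^3)$ as claimed. If no accepting state is reachable, we report $f_1\equiv f_2$; otherwise, using standard predecessor pointers stored during the search, we trace a path from the initial state to some accepting state and read off the corresponding sequence of transition labels $a_1,a_2,\ldots,a_n\in\{0,1\}$. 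The correctness of the previous lemma guarantees that this $a$ satisfies $g(f_1,f_2)(a)=1$, i.e. $f_1(a)\neq f_2(a)$.

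The main obstacle, such as it is, is simply bookkeeping: one must verify that the BFS really does run in time $O(t^2 n^3)$ rather than in time proportional to the naive bound on the state space. This is handled by the layered structure of $A$ — we never need to enumerate all of $[-tn,tn]^2$ at any level, only the (at most $4t^2 n^3$ total) states actually reachable from the initial state. With a hash table keyed on $(i,(W_1,W_2))$ to detect already-visited states in $O(1)$ amortized time, the search cost is indeed linear in the number of edges explored, giving the stated $t^2 n^3$ bound and completing the proof.
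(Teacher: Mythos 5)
Your proposal is correct and matches the paper's proof: the paper likewise builds the automaton for $f_1+f_2$ (XOR) via the preceding lemma with $k=2$, checks for a reachable accept state, and reads off a distinguishing assignment from a path to it. Your version merely spells out the reachability search and its cost in more detail.
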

\begin{proof} We build an automaton for $f_1+f_2$. If there is no
accept state then $f_1\equiv f_2$. If there is, then any
path from the start state to an accept state defines an assignment
$a$ such that $f_1(a)\not= f_2(a)$.
\end{proof}

\section{Two Rounds and Non-adaptive Algorithm}
In this section we give a two rounds algorithm for learning $\HS_t$ that uses~$n^{O(t)}$
membership queries.

Let $f=[w_1x_1+\ldots+w_nx_n\ge u]$. If there is a minterm
of weight one then $0\le u\le t$ and then all the minterms of $f$ are of weight
at most $t$. In this case we can find all the minterms in one round by
asking all the assignments in $B(0,t)$ (all other assignments gives $0$),
finding all the relevant variables and the antisymmetric variables and move to the second
round. Therefore we may assume that all the
minterms of $f$ are of weight at least two.

Consider the set
$$A_m=\bigcup_{i,j=0}^n B(0^i1^{n-i-j}0^j,m).$$
we now prove

\begin{lemma} Let $f\in \HS_t$. The variable $x_k$ is
relevant in $f$ if and only if there is $a\in A_{2t-2}$ such that $a_k=1$, $a|_{x_k=0}\in A_{2t-1}$
and $f(a)\not= f(a|_{x_k=0})$.
\end{lemma}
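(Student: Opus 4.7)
I will prove both directions. The $(\Leftarrow)$ direction is immediate: $f(a)\ne f(a|_{x_k=0})$ directly witnesses that $x_k$ is relevant. The plan for $(\Rightarrow)$ is to exhibit an interval pattern $a^*\in A_0$ that contains $x_k$ and sits ``on the boundary'' of $f$ at some coordinate $j^*\ne k$, and then to use Lemma~\ref{Min} to move this boundary onto $x_k$ at the price of at most $2t-2$ bit flips, keeping us inside $A_{2t-2}$.

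Assume $x_k$ is relevant in $f=[w_1x_1+\cdots+w_nx_n\ge u]$. Since $x_k$ is relevant, $f$ is non-constant; as $f$ is monotone (weights are non-negative), $f(1^n)=1$ and $f(0^n)=0$. Consider the sequence of interval patterns
\[
1^n \to 0\,1^{n-1}\to \cdots \to 0^{k-1}1^{n-k+1}\to 0^{k-1}1^{n-k}0\to \cdots \to 0^{k-1}1\,0^{n-k},
\]
obtained by first flipping the leftmost $1$ repeatedly until bit $k$ becomes the leftmost $1$, and then flipping the rightmost $1$ repeatedly until only bit $k$ is $1$. Every pattern in this sequence has the form $0^i1^{n-i-j}0^j$ with $i\in[0,k-1]$ and $j\in[0,n-k]$, so it lies in $A_0$ and has bit $k$ set to $1$; along the sequence, $f$ is non-increasing.

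If $f(0^{k-1}1\,0^{n-k})=1$, take $a=0^{k-1}1\,0^{n-k}$: then $a\in A_0\subseteq A_{2t-2}$, $a_k=1$, $f(a)=1$, and $a|_{x_k=0}=0^n\in A_0\subseteq A_{2t-1}$ with $f(0^n)=0$, so $a$ is the required witness directly. Otherwise $f$ drops from $1$ to $0$ at some step; let $a^*$ be the last pattern in the sequence with $f(a^*)=1$, and let $j^*\ne k$ be the bit flipped to produce the next pattern. Then $a^*_k=1$, $f(a^*)=1$, and $f(a^*|_{x_{j^*}=0})=0$, so the hypotheses of Lemma~\ref{Min} are met. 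Applying Lemma~\ref{Min} yields $c\in B(a^*,2t-2)$ with $c_k=1$, $f(c)=1$, and $f(c|_{x_k=0})=0$; since $a^*\in A_0$, we get $c\in A_{2t-2}$ and $c|_{x_k=0}\in B(a^*,2t-1)\subseteq A_{2t-1}$.

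The main subtlety lies in designing the shrinking sequence so that every intermediate assignment is simultaneously (i) an interval pattern in $A_0$ and (ii) still has bit $k$ set: flipping only the leftmost or rightmost $1$, while never touching bit $k$, achieves both. Given that, Lemma~\ref{Min} does the essential work of transferring the boundary from an arbitrary coordinate $j^*$ onto $x_k$ while paying only $2t-2$ additional Hamming distance.
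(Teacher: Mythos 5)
Your proof is correct and follows essentially the same route as the paper: walk along a staircase of interval patterns in $A_0$ between $0^n$ and $1^n$ that all (except $0^n$) have bit $k$ set, locate a transition point $a^*$ with $f(a^*)=1$ and $f(a^*|_{x_{j^*}=0})=0$, and invoke Lemma~\ref{Min} to move the boundary onto $x_k$ within distance $2t-2$. The only difference is that you traverse the staircase downward from $1^n$ rather than upward from $0^n$, which is immaterial.
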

\begin{proof} If $x_k$ is relevant in $f$ then $f\not\equiv 0,1$ and
therefore $f(0^{n})=0$ and $f(1^{n})=1$. Therefore there is an element $a$ in the following sequence
$$0^n,0^{k-1}10^{n-k}, 0^{k-1}1^20^{n-k-1}, \ldots, 0^{k-1}1^{n-k+1}, 0^{k-2}1^{n-k+2},\ldots,01^{n-1},1^n$$
and $j\in [n]$ such that $f(a)=1$ and $f(a|_{x_j=0})=0$.
Notice that $a_k=1$ and therefore by Lemma~\ref{Min} there is $c\in B(a,2t-2)$ such that
$c_k=1$, $f(c)=1$ and $f(c|_{x_k=0})=0$. Since $c|_{x_k=0}\in B(a,2t-1)$, the result follows.
\end{proof}

Therefore from the assignments in $A_{2t-1}$ one can determine the relevant
variables in $f$. This implies that we may assume w.l.o.g that all the variables are relevant.
This can be done by just ignoring all the nonrelevant variables
and projecting the relevant variables to new distinct variables $y_1,\ldots,y_m$.

We now show
\begin{lemma} \label{smin}If all the variables in $f\in \HS_t$ are relevant then there is
a strong minterm $a\in A_{2t-2}$ of $f$.
\end{lemma}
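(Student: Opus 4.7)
The plan is to exhibit a concrete interval assignment on which $f$ transitions from $0$ to $1$, and then invoke Lemma~\ref{StA} to produce a strong assignment nearby, which under the relevance hypothesis must actually be a strong minterm.

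First, I would observe that since every variable is relevant, $f \not\equiv 0$ and $f \not\equiv 1$, so $f(0^n) = 0$ and $f(1^n) = 1$ (otherwise $u \le 0$ or $u > w_1 + \cdots + w_n$, and in either extreme case no variable can flip the output). Therefore, along the monotone chain
\[
0^n,\; 10^{n-1},\; 1^2 0^{n-2},\; \ldots,\; 1^{n-1}0,\; 1^n,
\]
there must exist some index $k$ for which $a := 1^k 0^{n-k}$ satisfies $f(a) = 1$ but $f(a|_{x_k = 0}) = f(1^{k-1}0^{n-k+1}) = 0$.

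Next, I would apply Lemma~\ref{StA} to this $a$ with $i=k$: this yields a strong assignment $c \in B(a, 2t-2)$ of $f$. By Lemma~\ref{pre}(1), $c$ is a semiminterm of $f$, and by Lemma~\ref{pre}(2) combined with the hypothesis that every variable of $f$ is relevant, $c$ is in fact a minterm. Since $c$ is simultaneously a strong assignment and a minterm, $c$ is a strong minterm.

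Finally, it remains to check that $c \in A_{2t-2}$. But $a = 1^k 0^{n-k}$ is exactly the interval string $0^i 1^{n-i-j} 0^j$ with $i=0$ and $j = n-k$, so $c \in B(1^k 0^{n-k},\,2t-2) \subseteq A_{2t-2}$ by definition of $A_{2t-2}$. There is no serious obstacle here; the only thing to watch is that the relevance assumption is really needed to upgrade "semiminterm" to "minterm", and that the chain we chose uses the simplest interval form $1^k 0^{n-k}$ which is manifestly covered by the union defining $A_{2t-2}$.
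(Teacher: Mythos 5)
Your proof is correct and is exactly the argument the paper intends: the paper's proof is the one-line "Follows from Lemma~\ref{StA} and Lemma~\ref{pre}," and you have simply filled in the omitted details (the monotone chain giving a transition point $1^k0^{n-k}$, which is an interval assignment, followed by Lemma~\ref{StA} and the semiminterm-to-minterm upgrade from Lemma~\ref{pre}). No issues.
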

\begin{proof} Follows from Lemma \ref{StA} and Lemma~\ref{pre}.
\end{proof}

\begin{lemma} Let $f\in \HS_t$ and suppose
all the variables in $f$ are relevant. Suppose $f$ is antisymmetric in $x_j$ and $x_k$.
There
is $b\in B(a,4t-1)$ such that $b_1+b_2=1$ and $f|_{x_j=0,x_k=1}(b)\not= f|_{x_j=1,x_k=0}(b)$.
\end{lemma}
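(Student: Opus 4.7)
The plan is to chain together Lemma~\ref{smin} (existence of a strong minterm close to a base assignment in $A_{2t-2}$), part (2) of Lemma~\ref{pre} (a strong minterm is an actual minterm when all variables are relevant), and Lemma~\ref{antis} (an antisymmetric witness exists within distance $2t+1$ of any minterm of weight $\ge 2$). I read the statement as saying that $a$ is one of the base assignments $0^i 1^{n-i-\ell}0^\ell$ used to define $A_{2t-2}$ and that $b_1+b_2=1$ should be $b_j+b_k=1$, matching the antisymmetric pair under consideration; the conclusion is then that $b$ lies in $A_{4t-1}$, which is what the non-adaptive scheme will actually query.

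First I would invoke Lemma~\ref{smin} to produce a strong minterm $a'$ of $f$ with $a' \in A_{2t-2}$, i.e., $a' \in B(a, 2t-2)$ for some $a$ of the form $0^i 1^{n-i-\ell}0^\ell$. By Lemma~\ref{pre}(2), since all variables are relevant, $a'$ is an actual minterm of $f$. The opening paragraph of the section has already disposed of the case in which $f$ admits a minterm of weight one, so I may assume $\mathrm{wt}(a') \ge 2$.

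Next I would apply Lemma~\ref{antis} with $(x_1,x_2)$ replaced by $(x_j,x_k)$ to the minterm $a'$: since $f$ is antisymmetric in $x_j,x_k$, the lemma yields an assignment $b \in B(a', 2t+1)$ with $b_j+b_k=1$ and
\[
f|_{x_j=0,\,x_k=1}(b) \neq f|_{x_j=1,\,x_k=0}(b).
\]
Finally, the triangle inequality for the Hamming distance gives
\[
\mathrm{dist}(b,a) \le \mathrm{dist}(b,a') + \mathrm{dist}(a',a) \le (2t+1) + (2t-2) = 4t-1,
\]
so $b \in B(a, 4t-1)$ as required.

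The proof is essentially bookkeeping on top of the already-established machinery. The only nontrivial point—and the one I would flag—is the sanity check that the hypotheses needed to fire Lemma~\ref{antis} really are in place, namely that the chosen strong minterm has weight at least $2$. This relies on the standing assumption of the section that minterms of weight one have already been handled, so the step is safe; the rest is just adding $2t-2$ and $2t+1$.
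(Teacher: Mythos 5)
Your proposal is correct and follows essentially the same route as the paper: obtain a strong minterm in $A_{2t-2}$ via Lemma~\ref{smin}, note it has weight at least $2$ by the section's standing assumption, apply Lemma~\ref{antis} to get $b$ within distance $2t+1$, and conclude $b\in A_{4t-1}$ by the triangle inequality. Your reading of the (somewhat loosely stated) lemma --- that $a$ is the base assignment defining $A_{2t-2}$ and that $b_1+b_2=1$ should be $b_j+b_k=1$ --- matches the paper's intent, and your bookkeeping is if anything more careful than the paper's own.
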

\begin{proof} By Lemma~\ref{smin} there is a minterm $a\in A_{2t-2}$ of $f$.
Since $wt(a)>1$, by Lemma~\ref{antis} there is $b\in B(a,2t-1)$ such that
$b_1+b_2=1$ and $f|_{x_j=0,x_k=1}(b)\not= f|_{x_j=1,x_k=0}(b)$.
Since $b\in B(a,2t+1)\subseteq A_{4t-1}$ the result follows.
\end{proof}

Therefore from the assignments in $A_{4t-3}$ one can find a
permutation $\phi$ of the variables in
$f$ such that $f\phi=[w'_1x_1+w'_2x_2+\cdots+w'_nx_n\ge u]$ and
$w'_1\le w'_2\le \cdots \le w'_n$.

This completes the first round. We now may assume w.l.o.g that
$f=[w_1x_1+\cdots+w_nx_n\ge u]$ and $1\le w_1\le w_2\le \cdots\le w_n\le t$
and all the variables are relevant. The goal of the second round
is to find $w_i\in [1,t]$ and $u\in [0,nt]$. Since we know that $1\le w_1\le w_2\le \cdots\le w_n\le t$
we have
$${n+t-1\choose t-1} nt\le n^{t+1}$$ choices. That is at most $n^{t+1}$ possible functions
in $\HS_t$. For every two such functions $f_1,f_2$ we use Lemma~\ref{aut2}
to find out if $f_1\equiv f_2$  and if not to find an assignment $a$ such that
$f_1(a)\not= f_2(a)$.  This takes time
$${n^{t+1}\choose 2} t^2n^3\le n^{2t+7}.$$
Let $B$ the set of all such assignments. Then $|B|\le n^{2t+2}$.
In the second round we ask membership queries with all the assignments in $B$.

Now notice that if $f_1(a)\not=f_2(a)$ then either $f(a)\not=f_1(a)$ or $f(a)\not= f_2(a)$.
This shows that the assignments in $B$ eliminates all the functions that
are not equivalent to the target and all the remaining functions are equivalent
to the target.

Now using Lemma~\ref{nonadap} one can replace the set $B$ by $B(b,8t^3+O(t^2))$
for any minterm $b$ of $f$. This change the algorithm to a non-adaptive algorithm.

\end{document}